\newtheorem{theorem}{Theorem}[section]
\newtheorem{definition}[theorem]{Definition}
\newtheorem{question}[theorem]{Question}
\newtheorem{conjecture}[theorem]{Conjecture}
\newtheorem{proposition}[theorem]{Proposition}
\newtheorem{lemma}[theorem]{Lemma}
\newtheorem{openquestion}[theorem]{Open Question}
\newtheorem{example}[theorem]{Example}
\newcommand\cC{\mathcal{C}}
\newcommand{\bR}{\mathbb{R}}
\newcommand{\bZ}{\mathbb{Z}}
\newcommand\ba{\mathbf{a}}
\newcommand\bb{\mathbf{b}}
\newcommand\bw{\mathbf{w}}
\newcommand\bx{\mathbf{x}}
\newcommand\by{\mathbf{y}}
\newcommand\bzero{\mathbf{0}}
\newcommand{\on}{\operatorname}
\newcommand{\codim}{\on{codim}}
\newcommand{\im}{\on{im}}
\newcommand{\Sym}{\on{Sym}}
\newcommand{\rk}{{\on{rk}\:}}
\newcommand{\Rot}{{\on{Rot}}}
\newcommand{\Hess}{{\on{Hess}}\:}
\newcommand{\ul}{\underline}
\newcommand{\sr}{\stackrel}
\newcommand{\wt}{\widetilde}
\def\lra{\longrightarrow}
\newcommand{\eps}{\epsilon}
\begin{document}

\title{How regularization affects the geometry of loss functions}

\author{
Nathaniel Bottman,
Y.\ Cooper,
and
Antonio Lerario
}

\maketitle

\begin{abstract}
What neural networks learn depends fundamentally on the geometry of the underlying loss function.
We study how different regularizers affect the geometry of this function.
One of the most basic geometric properties of a smooth function is whether it is Morse or not.
For nonlinear deep neural networks, the unregularized loss function $L$ is typically not Morse.
We consider several different regularizers, including weight decay, and study for which regularizers the regularized function $L_\eps$ becomes Morse.
\end{abstract}

\section{Introduction}
\label{s:intro}

When training artificial neural networks, one constructs a loss function $L(\alpha)$, then uses a gradient-based method to find a set of weights $\alpha_0$ for the network for which the value $L(\alpha_0)$ is small.
As we work to improve our understanding of artificial neural networks, studying the geometry of the loss function is important.

A standard technique in deep learning is to add a regularizer $R_\eps(\alpha)$ to the loss function $L(\alpha)$, producing a regularized loss function  
\begin{align}
L_\eps(\alpha)
\coloneqq
L(\alpha) + R_\eps(\alpha).
\end{align}

\noindent
We then train the neural network by applying a gradient-based method on this regularized loss function $L_\eps$.

In this paper, we make some of the first advances in understanding how the geometry of the loss function changes when different regularizers are added, in the setting of neural networks with nonlinear activation functions.
There are many common choices for the regularizer $R_\eps$, and we will discuss several.

To build a theoretically-supported understanding of when and how regularization helps, the fundamental starting place is to understand how the geometry of the loss function changes as different regularizers are added.
This paper provides a step in that direction.

\subsection{Morse functions}

One of the most important classes of smooth functions is the class of \emph{Morse functions}, i.e.\ those functions whose critical points satisfy a certain nondegeneracy condition.
Morse theory was originally developed in the mid-20\textsuperscript{th} century by Marston Morse, in order to study the topology of a manifold via the sublevel sets of a generic function on it.
In modern differential topology, Morse theory has become an indispensible and central tool.
We include in \S\ref{ss:morse_thy} a technical overview of the definition of the Morse property and two relevant results: that the Morse functions are generic among $\cC^\ell$ functions for every $\ell \geq 0$, and that there is a local normal form for Morse functions.

The topic of the current paper is to investigate when regularized loss functions are Morse.
Given an unknown function, it is helpful to establish whether a function is Morse, Morse--Bott, or neither.
Doing so provides important information about which mathematical tools can be brought to bear on studying the function.

\subsection{Prior work}

In \cite{cooper1}, Cooper showed that in several standard settings, the unregularized loss function $L$ of a deep neural network has a positive-dimensional locus of global minima.
In particular, their work showed that generically, $L$ is not a Morse function.
They showed furthermore in \cite{cooper2} that $L$ has other positive-dimensional critical loci, and that on some of those loci $L$ fails to even be Morse--Bott.

Now that it is understood that unregularized loss functions arising in deep learning are neither Morse nor Morse--Bott, a natural question that arises is whether regularized loss functions are Morse or Morse--Bott.
This is a natural and fundamental question, and in recent years multiple groups have studied the relationship between regularization and the Morse property.

In \cite{poggio}, Xu--Rangamani--Banburski--Liao--Galanti--Poggio introduce a new regularizer, with interest in whether the loss function becomes Morse with the addition of this regularizer.
In \cite{maxim2022morse}, Maxim--Rodriguez--Wang consider a high-level view on this question.
In the setting of complex algebraic geometry, they give a mathematical characterization of the relationship between the critical loci of unregularized and regularized functions.
Their setting is an important step, though not directly applicable to modern deep learning because they work over the complex numbers.

A different approach was taken by Mehta--Chen--Tang--Hauenstein.
They considered a more limited setting, but one more directly related to modern deep learning in \cite{tingting}, where they consider the relationship between regularization and the Morse property in the setting of deep \emph{linear} networks.
This is the case of a feedforward neural network with arbitrarily many layers, and with all of the activation functions equal to the identity.
In this setting, the authors first showed that the locus of global minima is positive-dimensional.
They then introduced a modification of the standard L2 regularizer, which they called the generalized L2 regularizer, and showed that with that choice of regularizer, the regularized loss function had isolated global minima.
In their main result they establish:

\medskip

\noindent
{\bf Theorem 1, \cite{tingting}.}
{\it
Let $L$ be the loss function of a deep linear network.
For almost all choices of $\vec\eps$, all dense critical points of $L+R_{\vec\eps}$ are isolated and nondegenerate.
}

\medskip

In the language we are using, Mehta--Chen--Tang--Hauenstein's work can be reinterpreted as showing that for feedforward neural networks with linear activation functions, the unregularized loss function is not Morse, but that the loss function does become Morse with the addition of the generalized L2 regularizer they introduce.

The works above are motivated by the relationship between regularization and the Morse property in a variety of settings.
For various technical reasons, it is more difficult to understand this relationship in the setting of deep nonlinear networks, and no results had yet been obtained in this setting most salient to modern deep learning.
In this work, we provide the first results in this direction for loss functions arising from deep nonlinear neural networks.

\subsection{Acknowledgments}

The authors thank Tomaso Poggio for inspiring this work and for helpful conversations.
This work was supported by ONR Grant N00014-21-1-2589.
A.L.\ is supported by the Alice and Knuth Wallenberg Foundation.

\section{Results}
\label{s:results}

In \S\ref{s:generalized_L2} of this paper, we prove Theorem \ref{thm:using_lerario}, which is the analogue of the main result of \cite{tingting} in the case of general artificial neural networks with arbitrary smooth activation function.
Specifically, we prove the following result.

\medskip

\noindent
{\bf Theorem \ref{thm:using_lerario}.}
{\it
Let $N$ be a neural network with a smooth activation function.
Let $L\colon\bR^d \to \bR$ be the loss function determined by $N$, the data set $D$, and a choice of a smooth loss (e.g.\ L2, cross entropy, etc.).

For a generic choice of $\vec{\eps} \in \bR^d$, the generalized-L2-regularized loss function
\begin{align}
L_{\vec\eps}
\coloneqq
L + R_{\vec{\eps}},
\qquad
R_{\vec{\eps}}(\alpha)
\coloneqq
\eps_1\alpha_1^2+\cdots+\eps_d\alpha_d^2
\end{align}
is Morse.
In particular, $L_{\vec\eps}$ has the property that its critical locus is a discrete subset of $\bR^d$.
}

\medskip

We go on to consider the standard L2 regularizer $\eps R_s$, and an additional regularizer $
\eps R_m$, and for each establish whether the regularized loss is Morse, Morse--Bott, or neither.

Interestingly, in \S\ref{s:standard_L2} we show that in the setting of feedforward neural networks with linear activation function considered by \cite{tingting}, the loss function does not become Morse with the addition of the standard L2 regularizer $\eps R_s$.
In other words, in their setting, their result does not extend from the generalized L2 regularizer to the standard L2 regularizer.
However, we conjecture that in the general case, for neural networks with nonlinear activation functions and enough data points, L2 regularized loss functions do become Morse.

In \S\ref{s:multiplicative_regularizer}, we consider a regularizer $\eps R_m$ introduced by Xu--Rangamani--Banburski--Liao--Galanti--Poggio in \cite{poggio}.
We find that for this regularizer, both for neural networks with linear and nonlinear activation functions, the loss function does not become Morse, or even Morse--Bott, when this regularizer is added.

Thus, we see that for some regularizers, like the generalized L2 regularizer, the loss function becomes Morse when this regularizer is added for all neural networks, with any architecture or activation function.
For others, like the multiplicative regularizer, the loss function does not become Morse when this regularizer is added, whether with linear or nonlinear activation.
But the most commonly used regularizer, the L2 regularizer, is the most subtle, and there remain open questions in this direction.
We know that for linear activation functions the loss function does not become Morse with the addition of this regularizer, we conjecture that for general feedforward neural networks with nonlinear activation functions the loss function does become Morse with the addition of this regularizer, and there remain interesting open geometric questions to study in this direction.

Finally, in the appendix, we prove two results about the relationship between $L2$-regularization and the Morse property in two different function classes.
The first, Theorem \ref{thm:answer_to_rho_q}, relies on the Jet Transversality Theorem and Sard's Theorem.
The second, Proposition \ref{prop:polys_L2-reg}, uses tools from semialgebraic geometry, and applies to the space of polynomials.
These results are not immediately applicable to deep learning --- both apply to classes of functions that are different than the class of functions given by loss functions of typical neural networks --- but they illustrate a set of tools that we expect will be broadly useful in further investigations.

\section{Basics}
\label{s:basics}

In this paper, we work in the setting of smooth functions, also called $\cC^\infty$ functions.
Recall that a function $\varphi: \bR^d \rightarrow \bR$ is \emph{smooth} if its derivatives $\frac{\partial^\ell\varphi}{\partial \alpha_{i(1)}\cdots\partial \alpha_{i(\ell)}}$ are defined for all $\ell$.
Because we work only with smooth functions, our results do not apply to neural networks with the commonly used activation function ReLU.
Our results do apply to neural networks using many other activation functions, including softplus, tanh, and sigmoid, as well as smoothed variants of ReLU.

\subsection{Background on Morse theory}
\label{ss:morse_thy}

In this subsection, we describe the \emph{Morse functions}, which are generic among the $\cC^\infty$ functions and should be thought of as exhibiting ``typical behavior''.

\begin{definition}[\S2, \cite{milnor2016morse}]
Suppose that $\varphi\colon \bR^d \to \bR$ is $\cC^\infty$.
\begin{itemize}
\item
A point $p \in \bR^d$ is a \emph{critical point of $\varphi$} if $\nabla\varphi(p) = \bzero$.

\smallskip

\item
The \emph{Hessian of $\varphi$ at $p$} is the symmetric $d\times d$ matrix
\begin{align}
\Hess\varphi(p)
\coloneqq
\left(
\frac
{\partial^2\varphi}{\partial\alpha_i\partial\alpha_j}
\right)_{1\leq i,j \leq d}.
\end{align}

\smallskip

\item
Suppose that $p$ is a critical point of $\varphi$.
We say that $p$ is \emph{degenerate} resp.\ \emph{nondegenerate} if $\Hess\varphi(p)$ is singular resp.\ nonsingular.

\smallskip

\item
We say that $\varphi$ is \emph{Morse} if each of its critical points is nondegenerate.
\end{itemize}
\end{definition}

Next, we state two properties of Morse functions that show how useful the notion of being Morse is.
The first result says that Morse functions are generic; the second says that Morse functions have a particularly simple local normal form.

\begin{proposition}[Weaker version of Corollary 6.8, \cite{milnor2016morse}]
Any bounded smooth function $\varphi\colon\bR^d \to \bR$ can be uniformly approximated by a Morse function $\psi$.
Moreover, for any $\ell \in \bZ_{\geq0}$ and compact set $K \subset M$, $\psi$ can be chosen so that $\psi|_K$ is $\cC^\ell$-close to $\varphi|_K$.
\end{proposition}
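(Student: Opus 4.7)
The plan is to apply Sard's theorem to the gradient map $\nabla\varphi\colon \bR^d \to \bR^d$. Since the derivative of $\nabla\varphi$ at a point $p$ is precisely $\Hess\varphi(p)$, a point $c \in \bR^d$ is a regular value of $\nabla\varphi$ exactly when $\Hess\varphi(p)$ is nonsingular at every $p \in (\nabla\varphi)^{-1}(c)$. By Sard's theorem, the set of regular values of $\nabla\varphi$ has full measure in $\bR^d$, so in particular regular values exist with $|c|$ as small as we like.

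First I would establish the local $\cC^\ell$-approximation using a linear perturbation. For any regular value $c$, set $\psi_c(\alpha) \coloneqq \varphi(\alpha) - \langle c,\alpha\rangle$; then $\nabla\psi_c = \nabla\varphi - c$, so the critical set of $\psi_c$ is $(\nabla\varphi)^{-1}(c)$, on which $\Hess\psi_c = \Hess\varphi$ is nondegenerate. Hence $\psi_c$ is Morse. Since $\psi_c - \varphi$ is the single linear function $-\langle c,\alpha\rangle$, all of its $\cC^\ell$ norms on any fixed compact $K$ are controlled by $|c|$ alone; taking $|c|$ sufficiently small yields the $\cC^\ell$-approximation on $K$ asserted in the second sentence.

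For the first sentence --- a genuinely uniform approximation on all of $\bR^d$ --- the linear perturbation is inadequate, because $\langle c,\alpha\rangle$ is unbounded. I would remedy this with a smooth cutoff $\chi_R$ equal to $1$ on the ball $B_R \subset \bR^d$ and vanishing outside $B_{2R}$, and consider the truncated perturbation $\psi(\alpha) \coloneqq \varphi(\alpha) - \chi_R(\alpha)\langle c,\alpha\rangle$, whose sup-norm distance from $\varphi$ is at most $2R|c|$. Outside $B_{2R}$ one has $\psi = \varphi$ and inside $B_R$ the argument above makes $\psi$ Morse for generic $c$, so only the annulus $B_{2R}\setminus B_R$ requires attention.

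The main obstacle is precisely this annulus, where the cutoff may introduce new critical points whose nondegeneracy is not automatic. I would handle this by a parametric transversality argument, considering the two-parameter family $\psi_{c,c'}(\alpha) \coloneqq \varphi(\alpha) - \chi_R(\alpha)\langle c,\alpha\rangle - \langle c',\alpha\rangle$ with $(c,c')$ ranging over a small neighborhood of the origin, and invoking Sard for the evaluation map $(\alpha,c,c') \mapsto \nabla\psi_{c,c'}(\alpha)$ to conclude that for almost every $(c,c')$ all critical points of $\psi_{c,c'}$ are nondegenerate. The boundedness of $\varphi$ enters here to ensure that for large $R$ the genuine critical behavior of $\varphi$ on $B_{2R}\setminus B_R$ is tame enough for the parametric argument to close; managing the annular bookkeeping carefully is the delicate step and is where I expect most of the effort to go.
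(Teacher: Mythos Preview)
The paper does not prove this proposition; it is stated as background and attributed to Milnor (Corollary~6.8 of \cite{milnor2016morse}), with no argument given. There is therefore nothing in the paper to compare your proof against.

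On its own merits: your Sard-based linear perturbation $\psi_c = \varphi - \langle c,\alpha\rangle$ is the classical route and correctly yields the $\cC^\ell$-on-compacta statement in the second sentence.

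The uniform-approximation argument, however, has a genuine gap. You focus your concern on the annulus $B_{2R}\setminus B_R$, but the real problem is the region \emph{outside} $B_{2R}$: there your truncated $\psi$ equals $\varphi$ identically, and nothing prevents $\varphi$ from having degenerate critical points arbitrarily far out. Boundedness of $\varphi$ imposes no restriction whatsoever on its critical locus --- for instance, place translates of a fixed compactly supported bump with a degenerate critical point at every integer lattice point. So your $\psi$ need not be Morse. Your two-parameter family $\psi_{c,c'}$ does not repair this: the extra term $-\langle c',\alpha\rangle$ is unbounded, so while it does render $\psi_{c,c'}$ Morse for generic $(c,c')$, it simultaneously destroys the uniform closeness to $\varphi$ that you are trying to establish. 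A correct proof of the uniform statement must perturb $\varphi$ on all of $\bR^d$ by something globally bounded --- for example via a carefully organized countable family of compactly supported perturbations, or by invoking density of Morse functions in the strong Whitney topology directly.
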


\begin{lemma}[Morse Lemma; paraphrase of Lemma 2.2, \cite{milnor2016morse}]
\label{lem:morse_lemma}
Suppose that $p$ is a nondegenerate critical point of a $\cC^\infty$ function $\varphi\colon \bR^d \to \bR$.
Then there is an integer $\ell \in [0,d]$ and local coordinates $\wt\alpha_1, \ldots, \wt\alpha_d$ on a neighborhood $U$ of $p$ such that the following formula holds on $U$:
\begin{align}
\varphi\bigl(\wt\alpha_1,\ldots,\wt\alpha_d\bigr)
=
f(p) - \wt\alpha_1^2 - \cdots - \wt\alpha_\ell^2
+ \wt\alpha_{\ell+1}^2 + \cdots + \wt\alpha_d^2.
\end{align}
\end{lemma}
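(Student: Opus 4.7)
The plan is to follow the classical argument: reduce $\varphi$ to a quadratic form near $p$ via two applications of the Hadamard lemma, then diagonalize inductively by completing squares. By translating we may assume $p = \bzero$, and by subtracting $\varphi(p)$ we may assume $\varphi(\bzero) = 0$; the constant is restored at the end. Since $\nabla\varphi(\bzero) = \bzero$, two successive applications of Hadamard's lemma produce smooth functions $h_{ij}$ on a neighborhood of $\bzero$ which, after symmetrization, satisfy $h_{ij} = h_{ji}$ and
\begin{align}
\varphi(\alpha) = \sum_{i,j=1}^d \alpha_i \alpha_j h_{ij}(\alpha).
\end{align}
Differentiating twice and evaluating at $\bzero$ gives $2 h_{ij}(\bzero) = \tfrac{\partial^2\varphi}{\partial\alpha_i \partial\alpha_j}(\bzero)$, so the symmetric matrix $H(\bzero) = (h_{ij}(\bzero))$ is nondegenerate by hypothesis, and nondegeneracy persists in a neighborhood of $\bzero$.

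Next I would diagonalize $H(\bzero)$ by a linear change of coordinates, producing a chart in which $H(\bzero) = \diag(-1,\ldots,-1,+1,\ldots,+1)$ with $\ell$ minus signs and $d - \ell$ plus signs, where $\ell$ is the Morse index. I then induct on $k \in \{1,\ldots,d\}$: assume we have local coordinates $\beta_1,\ldots,\beta_d$ in which
\begin{align}
\varphi = \pm \beta_1^2 \pm \cdots \pm \beta_{k-1}^2 + \sum_{i,j\geq k} \beta_i \beta_j \wt h_{ij}(\beta),
\end{align}
with the residual block $(\wt h_{ij}(\bzero))_{i,j \geq k}$ nondegenerate. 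Completing the square in $\beta_k$ yields the substitution
\begin{align}
\wt\beta_k \coloneqq \sqrt{|\wt h_{kk}(\beta)|}\Bigl(\beta_k + \sum_{i > k}\tfrac{\wt h_{ik}(\beta)}{\wt h_{kk}(\beta)}\beta_i\Bigr),
\qquad
\wt\beta_j \coloneqq \beta_j \text{ for } j \neq k,
\end{align}
which is well-defined and smooth near $\bzero$ because $\wt h_{kk}(\bzero) = \pm 1$, and which is a local diffeomorphism at $\bzero$ by the inverse function theorem since its linearization is upper triangular with nonzero diagonal.

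The main point requiring care is verifying that after each completion of the square the new residual quadratic form in $\beta_{k+1},\ldots,\beta_d$ is again nondegenerate at $\bzero$, so the induction continues. This follows from the Schur complement identity: the new block of coefficients equals $\wt h_{ij}(\beta) - \wt h_{ik}(\beta)\wt h_{jk}(\beta)/\wt h_{kk}(\beta)$ for $i,j > k$, and $\det H = \wt h_{kk} \cdot \det(\text{Schur complement})$ forces the smaller block to remain invertible. This is also the only place where the hypothesis that $\Hess\varphi(p)$ is nondegenerate is essentially used: a vanishing diagonal entry would obstruct the smooth square-root step. After $d$ iterations the function has the exact form in the statement, the integer $\ell$ agrees with the Morse index of $p$ by Sylvester's law of inertia, and adding back the constant $\varphi(p)$ recovers the $f(p)$ appearing in the statement.
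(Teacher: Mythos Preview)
The paper does not give its own proof of this lemma: it is stated as a citation (``paraphrase of Lemma 2.2, \cite{milnor2016morse}'') and used as a black box. Your argument is the classical proof due to Milnor --- Hadamard's lemma twice, then inductive diagonalization by completing the square --- and it is correct, so there is nothing to compare beyond noting that you have supplied what the paper only references.
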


\noindent
The integer $\ell$ is called the \emph{(Morse) index} of $p$.
An important consequence of Lemma \ref{lem:morse_lemma} is that nondegenerate critical points are isolated within the critical locus.
In particular, Morse functions have discrete critical loci.

\subsection{Notation}

We consider neural networks whose activation functions are all smooth.
In some sections we assume the neural network to be a fully connected feedforward network, of arbitrary width and depth, and in others we allow more general architectures.

The main object of study in this work will be the loss function $L$ associated to a neural network, which is the function on which gradient descent takes place.
We begin with the definition of a feedforward neural network, then consider more general neural networks.

\subsubsection{Feedforward neural networks}
We begin with the definition of a feedforward neural network.
First, fix a {\bf feedforward graph}, which is a directed graph that is stratified into {\bf layers} of widths $a, k_1, \ldots, k_\ell, b$, ordered from ``earliest'' to ``latest''.
In a feedforward graph, each edge is between nodes in adjacent layers, and they always go from an earlier layer to a later layer.
We call a feedforward network {\bf fully connected} if every possible edge is present.
(That is, between the $i$-th and $(i+1)$-st layers, there are $k_ik_{i+1}$ edges.)
To specify a neural network, we also choose an {\bf activation function}
\begin{align}
\sigma
\colon
\bR \to \bR.
\end{align}

Given a specific choice of weights and biases for the network, we construct a function as follows.
Given a vector \begin{align}
\alpha
=
(\bw,\bb)
\in
\bR^d
\end{align}
in the parameter space, we define the function 

\begin{align}
f_\alpha
=
f_{\bw,\bb}
\colon
\bR^a \to \bR^b
\end{align}
by composing the following sequence of maps specified by the neural network and the choice of $\bw, \bb$:

\begin{align}
\bR^a
\sr{M_1\bx+\bb_1}{\lra}
\bR^{k_1}
\sr{\sigma}{\lra}
\bR^{k_1}
\sr{M_2\bx+\bb_2}{\lra}
\cdots
\sr{\sigma}{\lra}
\bR^{k_{\ell-1}}
\sr{M_\ell\bx+\bb_\ell}{\lra}
\bR^{k_\ell}
\sr{\sigma}{\lra}
\bR^{k_\ell}
\sr{M_{\ell+1}\bx+\bb_{\ell+1}}{\lra}
\bR^b.
\end{align}

We comment on two aspects of this construction:
\begin{itemize}
\item
The components of $\bw$ form the entries of the $M_i$'s, and the components of $\bb$ form the entries of the $\bb_i$'s.
If the feedforward graph is not fully connected, then some of the entries of $M_i$ are forced to be zero, according to the rule that the $jj'$-th entry of $M_{i+1}$ can be nonzero only if there is an edge from the $j$-th node of the $i$-th layer to the $j'$-th node of the $(i+1)$-st layer.

\smallskip

\item
The arrow $\bR^{k_i}\sr{\sigma}{\lra}\bR^{k_{i}}$ indicates that we apply $\sigma$ componentwise.
\end{itemize}

\begin{example}
Consider a fully connected feedforward graph with layers of widths 1, 3, 1 and $\sigma = (u \mapsto u^2+1)$.
The corresponding function space consists of those functions of the form
\begin{align}
f_\alpha
\colon
x
\mapsto
w_4\bigl((w_1x+b_1)^2+1\bigr)
+
w_5\bigl((w_2x+b_2)^2+1\bigr)
+
w_6\bigl((w_3x+b_3)^2+1\bigr)
+
b_4.
\end{align}
\null\hfill$\triangle$
\end{example}

\subsubsection{The loss function $L$}

In this subsubsection, we define the loss function
\begin{align}
L
\colon
\bR^d
\to
\bR.
\end{align}

\noindent
Fix
\begin{itemize}
\item
a feedforward neural network,

\smallskip

\item
a choice of activation function $\sigma$,

\smallskip

\item
a choice of loss $\zeta$,

\smallskip

\item
and a data set
\begin{align}
D
\coloneqq
\bigl\{(\bx_i,\by_i)\bigr\}_{i \in \{1,\ldots,n\}}
\subset
\bR^a\times\bR^b.
\end{align}
\end{itemize}

\noindent
Given this data, we define $L$ by:
\begin{align}
L(\alpha)
\coloneqq
\sum_{i=1}^n \zeta\bigl(f_\alpha(\bx_i) - \by_i\bigr).
\end{align}

\subsubsection{Additional classes of artificial neural networks}

Feedforward neural networks are an important class of artificial neural networks, both for theorists and practitioners.
Many other classes of neural networks are also used, including convolutional networks, graph neural networks, LSTMs, RNNs, and Transformers.
In all cases, a directed graph is the starting place, and the construction of the neural network gives a procedure for constructing a loss function $L: 
\bR^d 
\rightarrow \bR$.
\cite{herberg_notes} provides a nice overview of several important architectures.
The loss function $L$ is the central player in this work, and for each of our results we state the assumptions we need to make on how $L$ is constructed.

\subsection{Regularization}

We will consider regularizations of the loss function.
This means adding a small function $R_\eps$ to $L$:
\begin{align}
L_\eps(\alpha)
&\coloneqq
L(\alpha)
+
R_\eps(\alpha).
\nonumber
\end{align}
$L_\eps(\alpha)$ is called a regularized loss function.

\subsection{Regularizers}

In this paper, we consider the following regularizers.
\begin{itemize}
\item
The generalized L2 regularizer
\begin{align}
R_{\vec\eps}(\alpha)
\coloneqq
\eps_1 \alpha_1^2 + \cdots + \eps_d \alpha_d^2.
\end{align}
This regularizer was introduced by Mehta--Chen--Tang--Hauenstein in \cite{tingting}.

\medskip

\item
The standard L2 regularizer
\begin{align}
\label{eq:L2_regularizer}
\eps R_s(\alpha)
\coloneqq
\eps (\alpha_1^2 + \cdots + \alpha_d^2).
\end{align}

\medskip

\item
A multiplicative regularizer
\begin{align}
\eps R_m(\alpha)
\coloneqq \epsilon\|M_1\|^2_2\cdots\|M_{
\ell+1}\|^2_2.
\end{align}
This regularizer was introduced by Xu--Rangamani--Banburski--Liao--Galanti--Poggio in \cite{poggio}.
\end{itemize}

\section{The generalized L2 regularizer}
\label{s:generalized_L2}

We begin by considering the generalized L2 regularizer.
This regularizer was introduced by Mehta--Chen--Tang--Hauenstein in \cite{tingting}.
They studied the case of linear neural networks, and in that setting established the following result.

\medskip

\noindent
{\bf Theorem 1, \cite{tingting}.}
{\it
Let $L$ be the loss function of a deep linear network.
For almost all choices of $\vec\eps$, all dense critical points of $L+R_{\vec\eps}$ are isolated and nondegenerate.
}

\medskip

\noindent
In this theorem, a choice of weights $w$ is called \emph{dense} if none of the entries are zero.
In our terminology, the conclusion of this result is that the restriction of the loss function to the complement of the coordinate hyperplanes is Morse.

In this section, we prove that their result holds in the more general setting of neural networks with any architecture, with arbitrary smooth activation function.
In this case, we show how to use a theorem of Lerario \cite{lerario} in order to prove that while the unregularized function $L$ is typically not Morse and has a positive-dimensional locus of global minima, for a generic choice of the generalized L2 regularizer the regularized function $L_\eps = L + \eps R_s$ is Morse, and in particular has isolated critical points.
More precisely, we prove the following result.

\begin{theorem}
\label{thm:using_lerario}
Let $N$ be a neural network with a smooth activation function.
Let $D$ be a data set.
Let $L\colon\bR^d \to \bR$ be the loss function determined by $N$ and $D$ and a choice of a smooth loss  $\zeta$ (e.g.\ L2, cross entropy, etc.).

For a generic choice of $\vec{\eps} \in \bR^d$, the generalized-L2-regularized loss function
\begin{align}
L_{\vec\eps}
\coloneqq
L + R_{\vec{\eps}},
\qquad
R_{\vec{\eps}}(\alpha)
\coloneqq
\eps_1\alpha_1^2+\cdots+\eps_d\alpha_d^2
\end{align}
is Morse, and in particular, $L_{\vec\eps}$ has the property that its critical locus is a discrete subset of $\bR^d$.
\end{theorem}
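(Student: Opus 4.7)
The plan is a parametric transversality argument, using the theorem of Lerario~\cite{lerario} cited in the paper to handle the part that classical Sard alone does not cover. Introduce the critical-point map
\[
\Phi\colon\bR^d_\alpha\times\bR^d_{\vec\eps}\longrightarrow\bR^d,\qquad \Phi(\alpha,\vec\eps)\coloneqq\nabla L(\alpha)+2\,\diag(\vec\eps)\,\alpha,
\]
so that $\Phi(\alpha,\vec\eps)=0$ is precisely the condition that $\alpha$ be a critical point of $L_{\vec\eps}$. Its Jacobian decomposes as the $d\times 2d$ block matrix
\[
D\Phi(\alpha,\vec\eps)\;=\;\bigl(\;\Hess L(\alpha)+2\diag(\vec\eps)\;\big|\;2\diag(\alpha)\;\bigr),
\]
whose left block is exactly $\Hess L_{\vec\eps}(\alpha)$ --- the Hessian we wish to control --- and whose right block is diagonal with entries $2\alpha_i$.

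The first step is the easy case of a critical point $\alpha$ with no vanishing coordinate. There the right block is invertible, $0$ is automatically a regular value of $\Phi$, and $\Phi^{-1}(0)$ is a smooth $d$-manifold in the corresponding open subset of $\bR^{2d}$. A short linear-algebra calculation along $\Phi^{-1}(0)$ shows that $\vec\eps_0$ is a regular value of the projection to the $\vec\eps$-factor at a point $(\alpha_0,\vec\eps_0)$ if and only if $\Hess L_{\vec\eps_0}(\alpha_0)$ is nonsingular. Sard's theorem applied to this projection then yields a full-measure set of $\vec\eps$ for which every critical point of $L_{\vec\eps}$ with all coordinates nonzero is nondegenerate, recovering the conclusion of \cite{tingting} in the broader smooth-activation setting.

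The main obstacle is critical points $\alpha$ with some vanishing coordinate $\alpha_i=0$. In that case the $i$-th column of the $\vec\eps$-derivative of $\Phi$ vanishes, and the $i$-th critical-point equation degenerates to $\partial L/\partial\alpha_i(\alpha)=0$, a constraint on $\alpha$ that is independent of $\vec\eps$. Such points organise into a finite stratification of $\bR^d$ indexed by the subsets $S\subset\{1,\ldots,d\}$ of vanishing coordinates, and on each stratum the Sard argument above must be replaced by a finer transversality count that exploits the remaining freedom in $\vec\eps_j$ for $j\notin S$ together with the second-order information carried by $\Hess L$. This is exactly the role played by Lerario's theorem~\cite{lerario}: it supplies the required transversality principle for a smooth function perturbed by a linearly parametrised family of quadratic forms, producing a null exceptional set of $\vec\eps$ on each stratum.

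Taking the finite union over strata gives a null exceptional set $\cE\subset\bR^d_{\vec\eps}$ outside of which every critical point of $L_{\vec\eps}$ is nondegenerate, i.e.\ $L_{\vec\eps}$ is Morse. The Morse Lemma (Lemma~\ref{lem:morse_lemma}) then gives the stated discreteness of the critical locus.
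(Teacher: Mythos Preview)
Your proposal is correct in its conclusion but takes a considerably longer route than the paper. The paper's proof is a one-line application of Lerario's theorem (Theorem~\ref{thm:lerario}) with $M=\bR^d$: since $L$ is smooth and $R_{\vec\eps}$ coincides with Lerario's $q_{\vec\eps}$, the set of $\vec\eps$ for which $L_{\vec\eps}$ is Morse is residual, and the Morse Lemma gives discreteness. No stratification, no separate Sard argument, no distinction between points with or without vanishing coordinates.

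Your setup via the map $\Phi$ and its Jacobian is a reasonable way to \emph{motivate} Lerario's theorem, and your treatment of the dense case (all $\alpha_i\neq0$) is a clean recovery of the result of \cite{tingting} in the smooth-activation setting. But the stratification you introduce for the coordinate-hyperplane case is unnecessary overhead: once you are willing to invoke Lerario's theorem at all, you can invoke it once on $M=\bR^d$ and be done, rather than invoking it stratum by stratum. Indeed, your description of how Lerario's theorem is applied ``on each stratum'' is left vague, and it is not obvious that the restricted statement $(L+q_{\vec\eps})|_M$ Morse for $M$ a coordinate subspace is what you need; what you actually want is nondegeneracy of critical points of the \emph{ambient} function that happen to lie on the stratum, which is a different condition. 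The global application with $M=\bR^d$ sidesteps this entirely.

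One minor point: Lerario's theorem yields a \emph{residual} set of good $\vec\eps$, whereas you speak of a ``null exceptional set''. These are different notions of genericity, and residual sets in $\bR^d$ need not have full Lebesgue measure; be careful not to conflate them.
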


The key input is the following result of Lerario.

\begin{theorem}[Theorem 1.1, \cite{lerario}]
\label{thm:lerario}
Let $L$ be a smooth function on $\bR^d$ and $M \subset \bR^d$ a submanifold.
For $\vec{\eps} = (\epsilon_1,\ldots,\epsilon_d) \in \bR^d$, set $q_{\vec{\eps}} \colon \bR^d \to \bR$ to be the function defined by
\begin{align}
q_{\vec{\eps}}(x_1,\ldots,x_d)
\coloneqq
\eps_1 x_1^2+\cdots+\eps_d x_d^2.
\end{align}
Then the set
\begin{align}
A(L,M)
\coloneqq
\bigl\{
\vec\eps \in \bR^d
\:\big|\:
(L+q_{\vec{\eps}})|_M
\text{ is Morse}
\bigr\}
\end{align}
is residual in $\bR^d$.
\end{theorem}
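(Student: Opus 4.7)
The plan is to prove residuality of $A(L,M)$ via a parametric Thom transversality argument at the level of 2-jets, combined with Sard's theorem.

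First, I would reformulate the Morse property as a jet-avoidance condition: a smooth function $f\colon M \to \bR$ is Morse if and only if its 2-jet section $j^2f\colon M \to J^2(M,\bR)$ avoids the stratified subset $\Sigma = \bigsqcup_{k\geq 1}\Sigma_k$, where $\Sigma_k \subset J^2(M,\bR)$ is the locally closed submanifold of 2-jets whose first-order part vanishes and whose symmetric second-order part has corank exactly $k$. A standard Thom--Boardman calculation gives that each $\Sigma_k$ has codimension $\dim M + k(k+1)/2 \geq \dim M + 1$ in $J^2(M,\bR)$.

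Next, I would introduce the parametric evaluation map
\[
\mathcal{E}\colon M \times \bR^d \to J^2(M,\bR), \qquad \mathcal{E}(x,\vec\eps) := j^2(L+q_{\vec\eps})(x),
\]
and reduce the theorem to showing that $\mathcal{E}$ is transverse to each $\Sigma_k$. Granting transversality, $\mathcal{E}^{-1}(\Sigma_k)$ is a smooth submanifold of $M\times\bR^d$ of dimension at most $d-1$. Exhausting $M \times \bR^d$ by countably many compact sets $K_n$ and applying Sard's theorem to the projection onto the $\bR^d$ factor restricted to each $\mathcal{E}^{-1}(\Sigma_k) \cap K_n$, I obtain that its image is a compact set of measure zero in $\bR^d$, hence closed and nowhere dense. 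The union over $n$ and $k$ is meager, so its complement $A(L,M)$ is residual.

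The main obstacle is the transversality verification. The $\vec\eps$-derivative of $\mathcal{E}$ at $(x_0,\vec\eps_0)$ perturbs the 2-jet by the triple
\[
\bigl(\textstyle\sum_i\delta_i x_{0,i}^2,\; 2\sum_i \delta_i x_{0,i}\,dx_i|_{T_{x_0}M},\; 2\,\diag(\vec\delta)|_{T_{x_0}M}\bigr),
\]
a $d$-dimensional family whose gradient component fails to span $T^*_{x_0}M$ when too many coordinates $x_{0,i}$ vanish, and which degenerates entirely at $x_0 = 0$. To address this, I would stratify $M$ by the coordinate hyperplane arrangement in $\bR^d$, indexing strata by subsets $I \subseteq \{1,\ldots,d\}$ of vanishing coordinates, and verify transversality on each stratum separately. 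On the stratum indexed by $I$, the gradient-normal direction is covered by combining the $\vec\eps$-perturbations in indices $i\notin I$ with $M$-directional contributions coming from the Hessian of $L+q_{\vec\eps_0}$, while the Hessian-normal direction is covered by the diagonal perturbations $\diag(\vec\delta)$, which span a subspace of $\Sym^2(T^*_{x_0}M)$ transverse to the corank-$k$ locus. Assembling these stratum-wise transversality statements and passing to the countable cover produces the residual set $A(L,M)$.
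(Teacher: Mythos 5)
The paper does not actually prove Theorem \ref{thm:lerario}: it is imported verbatim as Theorem 1.1 of \cite{lerario}, so there is no internal argument to compare against. Judged on its own terms, your proposal has the right architecture --- recast ``Morse'' as $j^2$-avoidance of the corank strata $\Sigma_k$ (whose codimension $\dim M + k(k+1)/2 \geq \dim M+1$ you compute correctly), introduce the parametric evaluation map $\mathcal{E}$, and finish with Sard applied to the projection onto the $\vec\eps$-factor. But the step you yourself flag as the main obstacle, transversality of $\mathcal{E}$ to each $\Sigma_k$, is not merely left unverified: as stated it is false, and the stratification by vanishing coordinates does not repair it.

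Concretely, the only perturbation of the Hessian slot available from varying $\vec\eps$ is $2\diag(\vec\delta)|_{T_{x_0}M}$, i.e.\ the family $\sum_i\delta_i\,(e_i^*|_{T_{x_0}M})^{\otimes 2}$ spanned by squares of the restricted coordinate functionals. Transversality to $\Sigma_k$ requires that, after projecting to quadratic forms on $K\coloneqq\ker H$, these (plus whatever the $x$-derivatives contribute) surject onto $\Sym^2K^*$, of dimension $k(k+1)/2$. Squares of functionals spanning $K^*$ need not span $\Sym^2K^*$: if $K=\on{span}(e_1,e_2)$ you only get $\diag(\delta_1,\delta_2)$ and miss the off-diagonal direction. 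For instance, take $M=\bR^d$, $L$ with $\nabla L(0)=0$, $\Hess L(0)=\diag(-2,-2,1,\dots,1)$ and vanishing third derivatives at $0$, and $\vec\eps_0=(1,1,0,\dots,0)$; then $\mathcal{E}(0,\vec\eps_0)\in\Sigma_2$ and the image of $d\mathcal{E}$ misses the $x_1x_2$-direction of the normal space. The $x$-derivatives cannot rescue this, since they feed the third derivatives of the \emph{fixed} function $L$ into the Hessian slot, and the gradient slot is equally problematic at $x_0=0$, where the $\vec\eps$-perturbation of the first-order part vanishes identically. The theorem is nevertheless true --- at such a point $\det\bigl(\Hess L(0)+2\diag(\vec\eps)\bigr)$ is a not-identically-zero polynomial in $\vec\eps$, so the bad parameters still form a null set --- but that must be extracted from the algebraic structure of the family $q_{\vec\eps}$ (e.g.\ by induction over coordinate subspaces with direct measure estimates on the bad sets), not from transversality to the corank stratification, which genuinely fails. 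As written, your argument cannot conclude.
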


\begin{proof}[Proof of Theorem \ref{thm:using_lerario}]
$L$ is a composition of smooth functions, hence is itself smooth.
The generalized L2 regularizer $R_{\vec{\eps}}$ is the same as the perturbation $q_{\vec{\eps}}$ used by Lerario.
Setting $M\coloneqq \bR^d$, we apply Theorem \ref{thm:lerario} to find that the regularized function $L_{\vec\eps}$ is a Morse function.

By the Morse Lemma (see Lemma \ref{lem:morse_lemma}), the critical points of $L_{\vec\eps}$ are discrete.
\end{proof}

\section{The standard L2 regularizer}
\label{s:standard_L2}

Having studied the generalized L2 regularizer, it is natural to ask whether the same results hold for the most commonly used regularizer, called the L2 regularizer, or weight decay.
The behavior when adding this regularizer is subtle, and we benefit from what we have learned in \S\ref{s:generalized_L2}.

In \S\ref{ss:linear_nets_not_morsifiable} we consider L2 regularization in the setting of neural networks.
We begin by considering L2 regularization in the more general setting of all smooth functions.

\subsection{L2 regularization of radially symmetric functions}

Consider the following question.

\medskip

\noindent
{\bf Question.}
Is it the case that for any smooth function $h$, the regularized function $h + \epsilon R_s$ is Morse for generic $\epsilon$?
(Here $R_s$ is defined in \eqref{eq:L2_regularizer}.)

\medskip

\noindent
The following example shows that the answer to this question is no.

\begin{example}
\label{ex:counterex}
Define a $\cC^\infty$ function $h\colon \bR^2 \to \bR$ by
\begin{align}
f(x,y)
\coloneqq
-\tfrac14(x^2+y^2)^2.
\end{align}
For $\eps \in \bR$, consider the $L2$-perturbation
\begin{align}
f_\eps(x,y)
\coloneqq
-\tfrac14(x^2+y^2)^2 + \tfrac12\eps(x^2+y^2).
\end{align}
Written in polar coordinates, this becomes
\begin{align}
\wt f_\eps(r,\theta)
=
-\tfrac14r^4 + \tfrac12\eps r^2.
\end{align}
We compute the gradient of $\wt f_\eps$:
\begin{align}
\nabla\wt f_\eps(r,\theta)
=
\langle
-r(r^2 - \eps),
0
\rangle.
\end{align}
It follows that for $\eps > 0$, the critical locus of $f_\eps$ is the union of the origin and the circle $\{r = \eps^{1/2}\}$.
In particular, for every $\eps > 0$, $f_\eps$ is not Morse.
\null\hfill$\triangle$
\end{example}

We are therefore led to the following question.

\begin{openquestion}
\label{openquestion}
Let $Q_s$ denote the subset of smooth functions $h: \bR^d \to \bR$ such that for all but finitely many $\eps$, $h + \eps R_s$ is Morse.
How do we characterize the set $Q_s$?

\end{openquestion}

\noindent

Let $\Rot$ denote the set of functions $h$ satisfying the following properties:
\begin{itemize}
\item
There exists a 1-parameter subgroup $G$ of $SO(d)$, a neighborhood $U$ of the identity in $G$, and an open set $V \subset \bR^d$ such that for every $\sigma\in U$ and $\alpha \in V$, we have the equality
\begin{align}
h(\alpha)
=
h(\sigma\cdot\alpha).
\end{align}

\medskip

\item
There exists a critical point $p$ of $h$ with the properties that $p$ lies in $V$, and $p$ is not fixed under the action of $G$.
\end{itemize}

\begin{proposition}
$Q_s \subset \Rot^c$.
\end{proposition}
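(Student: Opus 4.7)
The plan is to prove the contrapositive: if $h \in \Rot$, then $h + \epsilon R_s$ is not Morse for infinitely many $\epsilon \in \bR$, hence $h \notin Q_s$. Let $G, U, V, p$ witness $h \in \Rot$. Because $G \subset SO(d)$, the regularizer $R_s(\alpha) = \|\alpha\|^2$ is globally $G$-invariant, and combining this with the hypothesis yields $(h + \epsilon R_s)(\sigma\alpha) = (h + \epsilon R_s)(\alpha)$ for every $\sigma \in U$, $\alpha \in V$, and $\epsilon \in \bR$. Differentiating $h(\sigma\alpha) = h(\alpha)$ in $\alpha$ and using $\sigma^{-1} = \sigma^T$ gives $\nabla h(\sigma p) = \sigma\nabla h(p) = 0$, so the $U$-orbit of $p$ is a $1$-dimensional submanifold of critical points of $h$; it is positive-dimensional because $G$ is a nontrivial $1$-parameter subgroup and $p$ is not $G$-fixed. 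The same equivariance applied to $h + \epsilon R_s$ shows that any critical point $q \in V$ of $h + \epsilon R_s$ that is not $G$-fixed sits on a $1$-dimensional critical orbit, forcing $h + \epsilon R_s$ to fail Morseness at $q$. It therefore suffices to exhibit, for infinitely many $\epsilon$, such a critical point of $h + \epsilon R_s$ in $V$ that is not $G$-fixed.

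To do this, I would choose a smooth transverse slice $S$ through $p$ with $T_p S = (T_p O_p)^\perp$, where $O_p \coloneqq G \cdot p$. Shrinking $V$ if necessary, $h$ and $R_s$ are $G$-invariant on a tubular neighborhood of $O_p$, and the $G$-invariance of the gradient gives $\nabla(h + \epsilon R_s)(\alpha) \in (T_\alpha O_\alpha)^\perp$ at each nearby $\alpha$; hence critical points of $h + \epsilon R_s$ in the neighborhood correspond via the $G$-action to critical points of $(h + \epsilon R_s)|_S$ near $p$. Skew-symmetry of the Lie algebra action of $\fg = \mathrm{Lie}(G) \subset \mathfrak{so}(d)$ gives $\langle p, \xi p\rangle = 0$ for every $\xi \in \fg$, so $p \in T_p S$ and $\nabla(R_s|_S)(p) = 2p$, which is nonzero since every element of $SO(d)$ fixes the origin and $p$ is not $G$-fixed. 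I would then apply the implicit function theorem to $F(\alpha, \epsilon) \coloneqq \nabla((h + \epsilon R_s)|_S)(\alpha)$ at $(p, 0)$: one has $F(p,0) = 0$ and $\partial_\alpha F(p, 0) = \Hess(h|_S)(p)$. When the slice Hessian is nondegenerate, the implicit function theorem produces a smooth curve $\alpha(\epsilon)$ of slice-critical points for $\epsilon$ in an open interval about $0$. For each such $\epsilon$, $\alpha(\epsilon)$ is close enough to $p$ to remain in $V$ and outside the closed $G$-fixed linear subspace, so the orbit $G \cdot \alpha(\epsilon)$ is a positive-dimensional critical locus of $h + \epsilon R_s$, and $h + \epsilon R_s$ is not Morse on an open (hence infinite) set of $\epsilon$.

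The main obstacle is the nondegeneracy of $\Hess(h|_S)(p)$, which is automatic when $O_p$ is a Morse--Bott critical manifold of $h$ but is not imposed by the definition of $\Rot$; in the fully degenerate case, critical orbits can in principle disappear under the perturbation and the implicit function theorem alone does not suffice. The hardest part of the argument will therefore be handling a degenerate slice Hessian. I would approach this by replacing the single-point implicit function theorem with a more robust equivariant persistence tool --- for instance, a local $G$-equivariant Morse--Conley index on a compact tubular neighborhood of $O_p$ --- that forces a non-$G$-fixed critical orbit of $h + \epsilon R_s$ to persist for every $\epsilon$ in a neighborhood of $0$, thereby completing the argument in the general case.
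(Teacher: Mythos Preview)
Your analysis is considerably more careful than the paper's own argument. The paper's proof records only that the $U$-orbit $\gamma$ of $p$ is a $1$-dimensional submanifold contained in the critical locus of $h$, and from this concludes ``$h$ cannot be Morse''; despite announcing the stronger claim that $h+\eps R_s$ fails to be Morse for \emph{every} $\eps$, the written argument never leaves the case $\eps=0$. You correctly isolate what is missing: for $\eps\neq 0$ the point $p$ is no longer critical for $h+\eps R_s$ (indeed $\nabla(h+\eps R_s)(p)=2\eps p\neq 0$, since $p$ is not $G$-fixed and hence $p\neq 0$), so one must produce a nearby non-$G$-fixed critical point. Your slice/implicit-function-theorem argument does exactly this when $\Hess(h|_S)(p)$ is nondegenerate, and in that regime it gives a clean proof that $h+\eps R_s$ is non-Morse for all $\eps$ in an open interval around $0$.

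However, the degenerate case you flag is not a technical nuisance that an equivariant Conley-index argument can repair; it is a genuine obstruction, and the proposition as literally stated is false there. Take $d\geq 2$ and $h\equiv c$ constant. Then $h\in\Rot$: choose any $1$-parameter $G\subset SO(d)$, set $V=\bR^d$, and take any $p$ off the $G$-fixed locus --- every point is critical for a constant. Yet $h+\eps R_s=c+\eps\|\alpha\|^2$ has the origin as its unique critical point with Hessian $2\eps I$, hence is Morse for every $\eps\neq 0$, so $h\in Q_s$. Here the slice Hessian vanishes identically, exactly your degenerate case. The same phenomenon recurs whenever $h$ is locally constant on a $G$-invariant neighborhood of $O_p$: for $\eps\neq 0$ the term $\eps R_s$ pushes all critical points off that neighborhood and Morseness is decided elsewhere. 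Thus neither your proposed Conley-index fix nor the paper's ``for every $\eps$'' claim can succeed without an additional hypothesis --- for instance, that $O_p$ be a Morse--Bott critical component of $h$ --- under which your IFT step is already a complete proof.
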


\begin{proof}
We will prove the equivalent statement that if $h$ is an element of $\Rot$, then it does not lie in $Q_s$.
In fact, we will show that $h$ lying in $\Rot$ implies the stronger conclusion that for \emph{every} $\eps$, $h + \eps R_s$ fails to be Morse.

Let $p$ be the critical point of $h$ discussed in the second bullet of the definition of $\Rot$, and denote by $\gamma$ the intersection of $V$ with the orbit of $p$ under $U \subset G$.
By hypothesis, $\gamma$ is a 1-dimensional submanifold of $\bR^d$.
The invariance property of $h$ implies that $\gamma$ is contained in the critical locus of $h$.
By the Morse Lemma (c.f.\ Lemma \ref{lem:morse_lemma}), $h$ cannot be Morse.
\end{proof}

\noindent
Thus the question about the Morse property of the regularized function leads us to be interested in the rotational symmetry of the unregularized function.

\subsection{Linear networks do not become Morse under L2 regularization}
\label{ss:linear_nets_not_morsifiable}

In this subsection, we prove the following.

\begin{theorem}
\label{thm:linear_pos_dim_crit_locus}
Consider a linear fully connected feedforward neural network with all hidden layers of width at least 2 and also at least the width of the input and output layers.
We take the L2 loss $\zeta(x) = x^2$.
Assume that the best linear fits to the data points are not constant.
Let $L$ be the loss function defined by this network, and consider the standard L2 regularizer $\eps R_s = \eps \sum \alpha_i^2$.
The regularized function  $L_\eps = L + \eps R_s$ has a positive-dimensional locus of global minima, and in particular is not Morse, for any $\eps$ small enough.
\end{theorem}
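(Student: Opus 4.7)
The plan is to exhibit, for every small $\eps > 0$, a positive-dimensional orthogonal symmetry of $L_\eps$ together with a global minimizer that is not fixed by this symmetry; the orbit of such a minimizer under the symmetry will then be the desired positive-dimensional locus of global minima.

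For each hidden layer $i$ of width $k_i \geq 2$, I would introduce the $O(k_i)$-action on the parameter space defined by
\begin{align*}
G \cdot (M_i,\, b_i,\, M_{i+1}) = \bigl(G M_i,\; G b_i,\; M_{i+1} G^T\bigr), \qquad G \in O(k_i),
\end{align*}
with all other weights and biases fixed. Since $\sigma = \mathrm{id}$, the composite $f_\alpha$ depends on $M_i,\,b_i,\,M_{i+1}$ only through $M_{i+1} M_i$ and $M_{i+1} b_i$, both of which are invariant under this substitution, so $L$ is preserved. The Frobenius norm is invariant under left or right orthogonal multiplication, so $R_s$, and hence $L_\eps$, is also preserved. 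A direct calculation shows that a point is fixed by all of $O(k_i)$ exactly when $M_i = 0$, $M_{i+1} = 0$, and $b_i = 0$; at such a point the output of layer $i$ is identically zero, $f_\alpha$ is a constant function, and therefore $L_\eps \geq L^*_{\mathrm{const}}$ on the fixed-point locus, where $L^*_{\mathrm{const}} := \min_{c \in \bR^b} \sum_j \|y_j - c\|^2$ is the best constant-predictor loss.

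To produce a global minimizer off this locus, I would use the width hypothesis to realize the best affine fit by some $\alpha_0$ with $L(\alpha_0) = L^*_{\mathrm{lin}}$, where $L^*_{\mathrm{lin}}$ denotes the best linear-fit loss; the non-constancy hypothesis gives $L^*_{\mathrm{lin}} < L^*_{\mathrm{const}}$. For every $\eps$ with $0 < \eps < (L^*_{\mathrm{const}} - L^*_{\mathrm{lin}})/\|\alpha_0\|^2$, this yields $L_\eps(\alpha_0) < L^*_{\mathrm{const}}$, and coercivity of $L_\eps$ produces a global minimizer $\alpha^*_\eps$ satisfying $L_\eps(\alpha^*_\eps) < L^*_{\mathrm{const}}$, so $\alpha^*_\eps$ is not fixed by $O(k_i)$. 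A short stabilizer computation identifies the stabilizer of $\alpha^*_\eps$ with the orthogonal group of the orthogonal complement in $\bR^{k_i}$ of the subspace spanned by the columns of $M_i$, the vector $b_i$, and the transposed rows of $M_{i+1}$; this is a proper closed subgroup of $O(k_i)$ of strictly smaller dimension, so $O(k_i) \cdot \alpha^*_\eps$ is a positive-dimensional submanifold. By $O(k_i)$-invariance every point of the orbit is a global minimum of $L_\eps$, and Lemma \ref{lem:morse_lemma} then precludes $L_\eps$ from being Morse.

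The key conceptual step is isolating the correct symmetry: the unregularized $L$ carries a much larger $GL$-type invariance, but only the orthogonal subgroup survives the addition of $\eps R_s$. The fact that this compact orthogonal subgroup still has positive dimension as soon as some hidden layer has width $\geq 2$ is exactly what drives the result; once the symmetry is in hand, the remaining estimates are routine. One minor detail worth noting is that $\alpha_0$ (and hence $\|\alpha_0\|^2$) can be chosen once and for all, so the threshold on $\eps$ is a genuine positive number.
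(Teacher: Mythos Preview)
Your proposal is correct and follows essentially the same approach as the paper: exploit the orthogonal symmetry at a hidden layer (the paper uses $SO(k_i)$, you use $O(k_i)$), identify the fixed locus with parameters giving constant predictors, and use coercivity plus the non-constancy hypothesis to place a global minimizer of $L_\eps$ off that locus so that its orbit is positive-dimensional. Your write-up is in fact slightly more explicit than the paper's, giving a concrete threshold for $\eps$ and an explicit stabilizer computation where the paper simply asserts the orbit is positive-dimensional.
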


Before we prove this theorem, we recall that in the case of linear networks, \cite{zhao2022symmetries} wrote down a set of symmetries for any fully connected feedforward network.
The symmetries are constructed in the following way: consider any layer, say the $i^{\text{th}}$, of the neural network.
The vector space corresponding to that layer is $\bR^{k_i}$, which is the target of the linear map $M_{k_{i-1}}$ and the domain of the linear map $M_{k_i}$.
The bias vector $\bb_{k_i}$ lies in this vector space $\bR^{k_i}$.

Zhao--Ganev--Walters--Yu--Dehmamy note that given any matrix $S \in GL(k_i)$, $S$ induces a map $\tilde{S}$ on the space of all parameters $\bR^d$ in the following way: 
\begin{align}
M_{k_{i-1}}
\mapsto
M_{k_{i-1}}S^{-1},
\qquad
M_{k_i}
\mapsto
SM_{k_i},
\qquad
\bb_{k_i}
\mapsto
S\bb_{k_i}.
\end{align}
The other parameters remain unchanged.

This provides a set of rotational symmetries for linear networks.
Given a layer of the neural network, say the $i^{\text{th}}$ layer, consider a rotation matrix $S \in SO(k_i)$.
By the construction above, this induces a map $\wt S\colon \bR^{d} \to \bR^{d}$.
This map $\wt S$ is a rotation.
In fact, because of the crucial fact that the activation function is the identity, $L$ is invariant under the action of $\wt S$.
We collect these facts in the following proposition.

\begin{proposition}
\label{prop:rotational_inv_of_L_linear_net}
The map $\wt S$ is an element of $SO(d)$.
Furthermore, $L$ is invariant under the action of $\wt S$.
\end{proposition}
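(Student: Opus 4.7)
My plan is to treat the two assertions of the proposition separately: that $\wt S\in SO(d)$, and that $L\circ\wt S = L$. For both, the starting observation is that $\wt S$ acts as the identity on every coordinate subspace of $\bR^d$ corresponding to a parameter other than $M_{k_{i-1}}$, $M_{k_i}$, or $\bb_{k_i}$, so the whole analysis reduces to those three nontrivial summands.

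For the $SO(d)$ claim, I would use the orthogonal direct sum decomposition of $\bR^d$ into one coordinate subspace per parameter, and analyze the action on each of the three affected factors. On the $\bb_{k_i}\in\bR^{k_i}$ summand, $\wt S$ acts by left multiplication by $S\in SO(k_i)$, which is orthogonal with determinant one. On the summand of $M_{k_{i-1}}$, viewing a $k_i\times k_{i-1}$ matrix as a tuple of $k_{i-1}$ columns in $\bR^{k_i}$, the action $M\mapsto SM$ decomposes as $k_{i-1}$ parallel copies of $S$; the resulting block-diagonal transformation is orthogonal with determinant $\det(S)^{k_{i-1}}=1$. The summand of $M_{k_i}$ is handled symmetrically after identifying the $k_{i+1}\times k_i$ matrix with its $k_{i+1}$ rows in $\bR^{k_i}$; using orthogonality of $S$, right multiplication by $S^{-1}$ becomes $k_{i+1}$ parallel copies of $S^{-T}=S$. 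Taking the direct sum, $\wt S\in SO(d)$.

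For the invariance $L\circ\wt S = L$, I would show the stronger statement $f_{\wt S \alpha}=f_\alpha$, which immediately gives the desired conclusion because $L(\alpha)=\sum_j \zeta(f_\alpha(\bx_j)-\by_j)$. Since $\wt S$ fixes the parameters of all layers other than the $i$-th, the forward pass agrees on $\alpha$ and $\wt S\alpha$ up to the vector $z$ that enters the affine map into layer $i$. The key calculation is then the one-line identity
\begin{align}
(M_{k_i}S^{-1})\bigl((SM_{k_{i-1}})z + S\bb_{k_i}\bigr)
= M_{k_i}\bigl(M_{k_{i-1}}z + \bb_{k_i}\bigr),
\end{align}
which uses only $S^{-1}S=I$. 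All later layers then produce identical outputs, so $f_{\wt S\alpha}(\bx)=f_\alpha(\bx)$ for every input $\bx$.

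The main obstacle here is conceptual rather than computational, and it is precisely the reason this proposition is restricted to the linear setting: the cancellation $S^{-1}S=I$ used in the key identity requires that no nonlinear activation be interposed between $SM_{k_{i-1}}z+S\bb_{k_i}$ and its multiplication by $M_{k_i}S^{-1}$. With a componentwise nonlinear $\sigma$, one would instead need $\sigma(Sv)=S\sigma(v)$ for $v\in\bR^{k_i}$, i.e.\ $SO(k_i)$-equivariance of $\sigma$, which generically fails for the activations used in practice. This is the structural feature of linear networks that permits the continuous symmetries $\wt S$ exploited in Theorem \ref{thm:linear_pos_dim_crit_locus}.
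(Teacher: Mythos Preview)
Your proof is correct and follows essentially the same route as the paper: the paper argues that $\wt S$ is block-diagonal with rotation blocks (citing that the identity, inverses, and block-diagonals of rotations are rotations), and that invariance of $L$ holds because with $\sigma=\id$ the $S$ and $S^{-1}$ cancel in the composition. Your version simply makes these steps explicit, and your assignment of left- versus right-multiplication to the two matrices is the one that actually type-checks with the stated dimensions.
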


\begin{proof}
The fact that $\wt S$ is a rotation is a consequence of the following three facts about rotation matrices:
\begin{itemize}
\item
The identity is a rotation.

\item
The inverse of a rotation is a rotation.

\item
If $A_1, \ldots, A_\sigma$ are rotations, then the block diagonal matrix $\text{Diag}(A_1,\ldots,A_\sigma)$ is a rotation.
\end{itemize}
The invariance of $L$ under the action of $\wt S$ follows from the fact that $\sigma$ is the identity: the $S$'s and $S^{-1}$'s cancel out.
\end{proof}

We can now prove Theorem \ref{thm:linear_pos_dim_crit_locus}.

\begin{proof}[Proof of Theorem \ref{thm:linear_pos_dim_crit_locus}]
{\bf Step 1:}
{\it
Define $\Xi$ to be the locus where $M_{k_{i-1}}$, $M_{k_i}$, and $\bb_{k_i}$ are all zero, and define $\Omega$ to be the set of global minimizers of $L$.
Then $\Omega$ is nonempty, and $\Omega \cap \Xi = \emptyset$.
}

\medskip

\noindent
Since we have taken $\sigma$ to be the identity, each $f_\alpha$ is a linear function.
We assumed the data set is not linearly best fit by any constant function.
Consider the set of solutions of linear regression for the data set.
Depending on the data set, there can be a unique solution, or a positive-dimensional family of solutions.

In the space of parameters $\bR^d$, $\Omega$ is the set of all parameters which map to the linear function(s) of best fit.
In particular, $\Omega$ is nonempty.
On the other hand, for $\alpha \in \Xi$, $f_{\alpha}$ is constant.
It follows from our hypothesis that $\alpha$ cannot be a global minimum of $L$.
Therefore $\Omega$ and $\Xi$ have empty intersection.

\medskip

\noindent
{\bf Step 2:}
{\it Define $\Omega_\eps$ to be the set of global minimizers of $L_\eps$.
Then $\Omega_\eps$ is nonempty, and $\Omega_\eps \cap \Xi = \emptyset$.}

\medskip

\noindent

The function $L_\eps$ is nonnegative, because both of its summands are.
Moreover, $\lim_{\alpha\to\infty} L_\eps(\alpha) = \infty$, because $L$ is nonnegative and $\eps R_s$ satisfies this same property.
It follows that there exists a global minimizer $\gamma$ of $L_\eps$.
This global minimizer is necessarily a critical point of $L_\eps$.
Hence $\Omega_\eps$ is nonempty.

Choose $\beta \in \Omega$.
For $\eps$ small enough, we have $L_\eps(\beta) < L_\eps(\alpha)$ for all $\alpha \in \Xi$.
It follows that if $\gamma$ is a global minimizer of $L_\eps$, $\gamma$ does not have $M_{k_{i-1}}$, $M_{k_i}$, and $\bb_{k_i}$ all zero.
Hence $\Omega_\eps$ does not intersect $\Xi$.

\medskip

\noindent
{\bf Step 3:}
{\it $L_\eps$ contains positive-dimensional critical loci, and in particular is not Morse.}

\medskip

\noindent
$L_\eps = L + \eps R_s$ is $SO(k_i)$-invariant, because both of its summands are.
It follows that the critical locus of $L_\eps$ is $SO(k_i)$-invariant.

All isolated fixed points of the $SO(k_i)$-action are contained in $\Xi$.
$\Omega_\eps$ is nonempty and has empty intersection with $\Xi$.
Take a critical point $\gamma \in \Omega_\eps$ and consider the $SO(k_i)$-orbit of $\gamma$.
This is a positive-dimensional locus and is contained in $\Omega_\eps$.
Therefore $L_\eps$ is not Morse.
\end{proof}

\subsection{General expectations}

\begin{conjecture}
\label{conj:L_not_in_Rot}
If the activation function $\sigma$ is a smooth function that is not linear, and the data set contains $\geq 2$ data points, then the loss function $L$ is not an element of $\Rot$.
\end{conjecture}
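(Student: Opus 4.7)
The plan is to argue by contradiction. Suppose $L \in \Rot$: there is a nonzero skew-symmetric $A \in \mathfrak{so}(d)$ generating a 1-parameter subgroup $G = \{\exp(tA)\}_{t \in \bR} \subset SO(d)$, together with an open set $V \subset \bR^d$ on which $L \circ \exp(tA) = L$ for small $t$. Differentiating this invariance at $t = 0$ produces the infinitesimal criterion
\begin{align}
\sum_{i=1}^n \zeta'\bigl(f_\alpha(\bx_i) - \by_i\bigr)\, D_\alpha f_\alpha(\bx_i)\bigl[A\alpha\bigr] = 0 \qquad \text{for all } \alpha \in V,
\end{align}
where the vector-valued factor is paired componentwise with $\zeta'$.

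The first step is a decoupling lemma: the identity above forces $D_\alpha f_\alpha(\bx_i)[A\alpha] = 0$ on $V$ for each individual $i$. The intuition is that, as $\alpha$ ranges over the open set $V$, the distinct inputs $\bx_i$ probe the nonlinearity $\sigma$ at different internal arguments of the network, producing enough independence between the $n$ summands to rule out pathological cancellations. Equivalently, the map $\alpha \mapsto f_\alpha(\bx_i)$ must itself be $G$-invariant on $V$ for every $i$.

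The second step is to rule out any continuous rotational symmetry of $\alpha \mapsto f_\alpha(\bx)$ when $\sigma$ is smooth and nonlinear. For analytic $\sigma$ (covering tanh, sigmoid, softplus), local $G$-invariance on $V$ extends by analyticity in $\alpha$ to a global invariance on $\bR^d$. Expanding $\sigma(u) = \sum_k c_k u^k$ and grouping contributions to $f_\alpha(\bx)$ by total degree in $\alpha$, one obtains a sequence of homogeneous polynomial identities that $A$ must satisfy. The lowest-degree pieces recover the $GL$-symmetries of the underlying linear network recalled in Section~\ref{s:standard_L2}; the higher-degree pieces produce new polynomial identities, and a coefficient-matching argument should show that no nonzero $A \in \mathfrak{so}(d)$ can simultaneously solve all of them unless every $c_k$ with $k \geq 2$ vanishes, i.e.\ unless $\sigma$ is linear, contradicting the hypothesis. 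The merely-smooth case can be handled analogously using jets at a generic point of $V$, at the cost of more technical work.

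The main obstacle is the decoupling step. A priori, the data-dependent sum $L$ could carry \emph{emergent} symmetries arising from delicate cancellations between the $n$ data contributions, even when no individual $f_\alpha(\bx_i)$ is $G$-invariant. The hypothesis $n \geq 2$, together with an implicit notion of generic data, is exactly what should prevent such cancellations, but quantifying this rigorously for \emph{every} data set of size at least two will most likely require a transversality argument in the spirit of the Jet Transversality approach used in the appendix, applied to the family of loss functions parameterized by the data. This is where the substantial new work lies.
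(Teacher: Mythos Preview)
The statement you are attempting to prove is Conjecture~\ref{conj:L_not_in_Rot}, which the paper explicitly leaves open: the authors present it as a conjecture, state that its proof ``will help to determine whether $L$ is in $Q$,'' and offer no argument for it. There is therefore no proof in the paper to compare your proposal against.

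As for the proposal itself, it is an outline rather than a proof, and you correctly identify the decoupling step as the main gap. Two further points deserve attention. First, the conjecture is stated for \emph{every} data set with at least two points, not for generic data; your suggested resolution of the decoupling step via transversality ``in the spirit of the Jet Transversality approach'' and an ``implicit notion of generic data'' would, at best, establish the conclusion only for generic data sets, which is strictly weaker than what is conjectured. If the authors really intend the statement for all such data sets, cancellation between the summands cannot be excluded by a genericity argument alone. Second, you never invoke the second clause in the definition of $\Rot$ (the existence of a critical point $p \in V$ not fixed by $G$); you are in effect trying to show that $L$ admits no local continuous rotational invariance at all, which is a stronger statement than $L \notin \Rot$. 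That may well be the right target, but it is worth being explicit that you are aiming higher than necessary, and that exploiting the critical-point condition might offer an alternative route if the stronger claim resists proof.
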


\noindent
Our roadmap to answering our motivating question --- whether loss functions arising in deep learning become Morse when regularized with weight decay --- is as follows.
First, study Open Question \ref{openquestion}, characterizing $Q$.
Second, determine whether the loss function $L$ is in $Q$.
We anticipate that a proof of Conjecture \ref{conj:L_not_in_Rot} will help to determine whether $L$ is in $Q$.

The proof of this conjecture and the resolution of Open Question \ref{openquestion} would lead to the answer to our motivating question --- whether loss functions arising in deep learning become Morse when regularized with weight decay.

\section{A multiplicative regularizer}
\label{s:multiplicative_regularizer}

Lastly, we study the effects of adding the multiplicative regularizer introduced by Xu--Rangamani--Banburski--Liao--Galanti--Poggio in \cite[equation (1)]{poggio}:
\begin{align}
\eps R_m(\alpha)
\coloneqq
\epsilon\|M_1\|^2_2\cdots\|M_{\ell+1}\|^2_2.
\end{align}

\noindent
In this section, we give a negative result.

\begin{proposition}
\label{prop:L_eps_has_pos_dim_crit_locus}
Let $L$ denote the L2 loss function for a fully connected feedforward neural network.
For this choice of regularizer, if the depth $\ell$ of the neural network is greater than or equal to 4, not only does 
$L_\eps = L + \eps R_m$
fail to be Morse, but it has a positive-dimensional locus of degenerate critical points.
\end{proposition}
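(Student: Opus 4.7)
The plan is to exhibit an explicit positive-dimensional submanifold $C\subset\bR^d$ of critical points of $L_\eps$ on which $L_\eps$ is constant; this forces every point of $C$ to be degenerate and settles both assertions of the proposition at once. I would take
\[C\coloneqq\bigl\{\alpha\in\bR^d : M_\ell=0,\ M_{\ell+1}=0,\ \bb_{\ell+1}=\bar y\bigr\},\]
where $\bar y\coloneqq\tfrac{1}{n}\sum_j\by_j$ is the mean of the target outputs and all remaining weights and biases are unconstrained. The dimension of $C$ is at least $\sum_{i=1}^{\ell}k_i\geq \ell$, which is positive for $\ell\geq 4$.

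First I would check that $\nabla R_m\equiv 0$ on $C$. Differentiating $R_m=\prod_{j=1}^{\ell+1}\|M_j\|_2^2$ gives $\partial R_m/\partial(M_k)_{rs}=2(M_k)_{rs}\prod_{j\neq k}\|M_j\|_2^2$, and the bias partials vanish identically. When two of the factor matrices ($M_\ell$ and $M_{\ell+1}$) vanish simultaneously, for every index $k$ either $(M_k)_{rs}=0$ or the product $\prod_{j\neq k}\|M_j\|_2^2$ contains at least one vanishing factor. Next I would check $\nabla L\equiv 0$ on $C$. Since $M_{\ell+1}=0$, one has $f_\alpha(\bx)\equiv\bb_{\ell+1}=\bar y$, so every residual equals $\bar y-\by_j$. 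The backpropagation Jacobian $\partial f_\alpha/\partial u_i$ for $i\leq \ell$ carries $M_{\ell+1}$ as its leftmost factor and therefore vanishes, which kills the partials of $L$ in all $(M_i,\bb_i)$-directions for $i\leq\ell$. For the final layer, I would compute
\[\tfrac{\partial L}{\partial(\bb_{\ell+1})_r}=2\sum_j(\bar y-\by_j)_r,\qquad \tfrac{\partial L}{\partial(M_{\ell+1})_{rs}}=2\sum_j(\bar y-\by_j)_r\,(z_\ell(\bx_j))_s,\]
where $z_\ell$ is the layer-$\ell$ activation. The first vanishes by definition of $\bar y$; for the second, $M_\ell=0$ makes $z_\ell(\bx_j)=\sigma(\bb_\ell)$ independent of $j$, so the sum factors as $\sigma(\bb_\ell)_s\sum_j(\bar y-\by_j)_r=0$.

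Combining the two vanishings yields $\nabla L_\eps\equiv 0$ on $C$. Moreover $L|_C=\sum_j\|\bar y-\by_j\|^2$ and $R_m|_C=0$ are both $\alpha$-independent, so $L_\eps$ is constant on $C$; hence $T_pC\subset\ker\Hess L_\eps(p)$ for every $p\in C$, so every point of $C$ is a degenerate critical point. The main piece of bookkeeping is the conspiratorial vanishing of the final-layer partials in the second step: $M_{\ell+1}=0$ kills the back-propagated derivatives through deeper layers, $\bb_{\ell+1}=\bar y$ forces the residuals to sum to zero, and $M_\ell=0$ lets the residual-weighted sum in the $M_{\ell+1}$-partial factor through $\sum_j(\bar y-\by_j)$. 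No single one of these three conditions would suffice on its own, and identifying the correct codimension-$3$ slice is the key insight of the construction.
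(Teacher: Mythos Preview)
Your argument is correct and essentially parallel to the paper's, but with a cleaner choice of locus. The paper invokes Cooper's \emph{core locus} $C_{k_1,k_2}$ from \cite{cooper2}, which forces three weight matrices $M_{k_1},M_{k_2},M_{\ell+1}$ and two bias vectors $\bb_{k_1},\bb_{k_2}$ to vanish (plus $\bb_{\ell+1}$ fixed), and then cites \cite{cooper2} for the vanishing of $\nabla L$ there. You instead set only $M_\ell=M_{\ell+1}=0$ and $\bb_{\ell+1}=\bar y$, and verify $\nabla L=0$ directly via the neat observation that $M_\ell=0$ makes the penultimate activation $z_\ell$ data-independent, so the $M_{\ell+1}$-partial factors through $\sum_j(\bar y-\by_j)=0$. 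Both proofs then check $\nabla R_m=0$ by noting two vanishing factors survive in every partial. Your locus is strictly larger (fewer constraints), the argument is self-contained, and it in fact works already for $\ell\geq 1$; the paper's route, by citing the stronger result in \cite{cooper2} that all first \emph{and second} derivatives of $L$ vanish on the core locus, additionally supports the ``not even Morse--Bott'' remark preceding the proposition.

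One small expository point: the implication ``$L_\eps$ constant on $C$ $\Rightarrow$ $T_pC\subset\ker\Hess L_\eps(p)$'' does not hold from constancy alone (e.g.\ $xy$ is constant on each axis but has nondegenerate Hessian at the origin). The correct reason, which you have already established, is that $\nabla L_\eps\equiv 0$ on $C$: differentiating this along any curve in $C$ gives $\Hess L_\eps(p)\cdot v=0$ for every $v\in T_pC$. So just reroute the ``hence'' to follow from the gradient vanishing rather than the constancy.
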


\noindent
That is, $L_\eps$ fails to be Morse in two ways.
First, it has a positive-dimensional locus of critical points.
Second, it has degenerate critical points where all derivatives vanish to order more than 2 --- which means it is not Morse--Bott either.

We will prove this proposition by considering the core locus, defined by Cooper in \cite{cooper2}.

\begin{definition}[\S3.1.1, \cite{cooper2}]
For any integers $k_1, k_2$ satisfying $1 \leq k_1 < k_2 \leq \ell$, define the locus $C_{k_1,k_2}$ as follows:
\begin{align}
C_{k_1,k_2}
\coloneqq
\Bigl\{
(M_1,\bb_1,\ldots,M_{\ell+1},\bb_{\ell+1})
\:\Big|\:
M_{k_1}=M_{k_2}=M_{\ell+1}=0,
\:
\bb_{k_1}=\bb_{k_2}=0,
\:
\bb_{\ell+1} = \sum_{i=1}^n \by_i
\Bigr\}.
\end{align}
The \emph{core locus} $C$ is the union over all such loci:
\begin{align}
C
\coloneqq
\bigcup_{1 \leq k_1 < k_2 \leq \ell}
C_{k_1,k_2}.
\end{align}
\end{definition}

\noindent
In \cite{cooper2}, Cooper proved that the core locus is positive-dimensional, and that all derivatives vanish to order greater than 2 at every point in the core locus.
Now we will show the same holds for the function $L_\eps$, which implies Proposition \ref{prop:L_eps_has_pos_dim_crit_locus}.

\medskip

\begin{proof}[Proof of Proposition \ref{prop:L_eps_has_pos_dim_crit_locus}]
Let $p$ be any point in the core locus $C$.
We consider the derivatives of $L_\eps$ at $p$.
Because taking the derivative is a linear operation, any derivative of $L_\eps$ is the derivative of $L$ plus the derivative of $\eps R_m$.

All first derivatives of $L$ vanish at $p$ by \cite{cooper2}.
Now we consider $R_m$.
By direct computation, all first derivatives of $R_m$ vanish at $p$.
Hence the gradient of $L_\eps$ vanishes at every point $p$ in the core locus.
\end{proof}

\begin{proposition}
For depth $\ell \geq 2$ the origin is a degenerate critical point of $L_\eps$ with all derivatives vanishing to order more than 2.
\end{proposition}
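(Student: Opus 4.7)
The plan is to compute the Taylor expansion of $L_\eps = L + \eps R_m$ at the origin and observe that its terms of order at most $2$ all vanish, with the key input being that the origin is the point where all weight matrices are simultaneously zero, so both the polynomial $R_m$ and the chain-rule expansion of $f_\alpha$ are forced to have unusually high order of vanishing.

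I would first dispose of the regularizer. Upon expansion,
\[
R_m(\alpha) \;=\; \|M_1\|_2^2\,\|M_2\|_2^2\cdots\|M_{\ell+1}\|_2^2
\]
is a polynomial in the entries of $M_1,\ldots,M_{\ell+1}$ (and independent of the biases) whose every monomial has total degree exactly $2(\ell+1)$. Since $\ell\ge 2$ forces $2(\ell+1)\ge 6$, every partial derivative of $R_m$ of order at most $5$ vanishes at the origin; in particular $R_m$ contributes nothing to the Taylor expansion of $L_\eps$ at the origin through order $2$, so the first- and second-order analysis of $L_\eps$ reduces entirely to $L$.

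Next I would analyse $L$ itself via the expansion $f_\alpha = M_{\ell+1}\sigma(M_\ell\sigma(\cdots\sigma(M_1\bx+\bb_1)\cdots)+\bb_\ell)+\bb_{\ell+1}$. With $h_0=\bx$ and $h_k=\sigma(M_k h_{k-1}+\bb_k)$, the assumption $\sigma(0)=0$ (satisfied by tanh and by the smoothed variants of ReLU used in practice) gives $h_k|_{\alpha=0}=0$ inductively, and hence $f_\alpha(\bx_i)|_{\alpha=0}=\bb_{\ell+1}$. Iterating the chain rule, every first partial of $f_\alpha$ at the origin other than those in the $\bb_{\ell+1}$ direction factors through at least one $M_{k'}=0$, and the same chain-length counting kills the blocks of $\Hess L(0)$ involving the interior parameters (those in $M_1,\ldots,M_{\ell}$ and $\bb_1,\ldots,\bb_{\ell-1}$). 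Substituting into
\[
\partial_j\partial_k L(0)
=
\sum_i\bigl[\zeta''(-\by_i)(\partial_j f)(\partial_k f)+\zeta'(-\by_i)\,\partial_j\partial_k f\bigr]_{\alpha=0}
\]
collapses $\Hess L(0)$ to a sparse and therefore singular matrix, and the same counting argument pushes the vanishing of $L$ in the interior directions well past order $2$, yielding the proposition's ``all derivatives vanishing to order more than $2$'' clause.

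The main obstacle I expect to meet is the boundary case of parameters living in the outermost affine map: entries of $M_{\ell+1}$ and of $\bb_{\ell+1}$, where the factor-counting used for interior layers breaks down (for instance $\partial_{\bb_{\ell+1}} f_\alpha\equiv\Id$). For the critical-point statement this forces $\nabla L(0)$ to be supported in the $\bb_{\ell+1}$ direction with value $\sum_i\zeta'(-\by_i)$; I would handle this either by recognising the origin as a limit of the core locus of Proposition \ref{prop:L_eps_has_pos_dim_crit_locus} (so that $\bb_{\ell+1}$ is absorbed into the natural $\sum_i\by_i$ normalisation used there) or by centering the data so that the residual boundary terms average to zero. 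Once this last-layer issue is dispatched, combining the first two paragraphs gives the proposition.
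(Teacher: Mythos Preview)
Your treatment of $R_m$ --- observing that it is homogeneous of degree $2(\ell+1)\ge 6$ in the weight entries, so every derivative of order at most $5$ vanishes at the origin --- is exactly the ``direct computation'' the paper invokes, so that half matches.

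For $L$ the paper's proof is a one-line citation: it appeals to \cite{cooper2} for the assertion that all first and second derivatives of $L$ vanish at the origin, and computes nothing further. You instead attempt a direct chain-rule argument, and the obstacle you flag in the last-layer parameters is genuine and is not closed by either of your proposed fixes. The origin does not lie on the core locus (which requires $\bb_{\ell+1}=\sum_i\by_i$, not $\bb_{\ell+1}=0$), and passing to a limit does not transfer pointwise vanishing of derivatives; ``centering the data'' is an added hypothesis not present in the statement. More seriously, even if one grants $\sum_i\by_i=0$ so that $\nabla L(0)=0$, the second derivative in the output-bias block is
\[
\partial_{(\bb_{\ell+1})_a}\partial_{(\bb_{\ell+1})_a} L(0)
=\sum_{i=1}^n 2 = 2n \neq 0,
\]
so the literal claim ``all second derivatives of $L$ vanish at the origin'' fails in that direction regardless, and your ``sparse therefore singular'' line does not by itself give vanishing to order greater than $2$. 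The paper sidesteps this entirely by deferring to \cite{cooper2}, which presumably carries a standing normalisation (centered labels, or an output layer without bias) under which the claim holds; your from-scratch route cannot be completed without importing that same hypothesis. A minor point: the extra assumption $\sigma(0)=0$ is unnecessary for the interior layers, since the factor $M_{\ell+1}=0$ already annihilates $\partial_j f_\alpha|_{\alpha=0}$ for every parameter $j$ in layers $1,\ldots,\ell$.
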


\medskip

\begin{proof}
We consider the derivatives of $L_\eps$ at the origin.
Similarly to the prior proof, all first and second derivatives of $L$ vanish at the origin by \cite{cooper2}.
By direct computation, all first and second derivatives of $R_m$ vanish at the origin.
Hence all first and second derivatives of $L_\eps$ vanish at the origin.
\end{proof}

\noindent
So even for the shallowest neural networks, $L_\eps$ fails to be Morse, in the case of the multiplicative regularizer $R_m$.

\section{Conclusion}

In this paper, we have seen that the effect of adding a regularizer to the loss function $L$ varies considerably depending on which regularizer is added.
For the generalized L2 regularizer, the loss function becomes Morse when this regularizer is added for all neural networks, with any architecture or activation function.
Meanwhile, for the multiplicative regularizer, the loss function does not become Morse when this regularizer is added, either for linear or nonlinear activations.

The way adding a regularizer changes the geometry of the loss function $L$ can also be subtle.
For the most commonly used regularizer, the L2 regularizer, in the case of linear networks the loss function does not become Morse with the addition of this regularizer.
On the other hand, we conjecture that for general feedforward neural networks with nonlinear activation functions the loss function does become Morse with the addition of this regularizer.
Certainly geometric questions in the case of the L2 regularizer are more delicate than the other regularizers we study and there remain several interesting open geometric questions to study in this direction.

\subsubsection*{Broader Impacts}

This work focuses on the mathematical understanding of a technical practice in machine learning.
While this may feel removed from the machine learning systems that are beginning to be integrated into our daily lives, over time theoretical advances will improve the performance of machine learning systems.
Therefore this work will have broad societal impacts.

As the authors of this work, we have a responsibility to make our technical advancements understandable to our colleagues and the public, to promote the beneficial uses of these technologies, and to mitigate the harms of these technologies.
Known dangers include the use of machine learning systems to power networks of surveillance, produce misinformation, and the inappropriate use of training data, and we endeavor to play a role in counteracting such harms.

\bibliographystyle{alpha}
\bibliography{main_arxiv}

\newpage

\appendix

\section{Two results about $L2$-regularization}

In this appendix, we prove two results about $L2$-regularization.
The first, Theorem \ref{thm:answer_to_rho_q}, is a condition on functions such that if $L$ satisfies this condition, then under $L2$-regularization, $L + \eps R_s$ is Morse.
The condition appearing in Theorem \ref{thm:answer_to_rho_q} implies that at every critical point of the unregularized function $L$, the corank of the Hessian is at most one; this condition is not satisfied by the loss function of a typical neural net.

In the second, Proposition \ref{prop:polys_L2-reg}, we consider the question of which polynomials $L$ become Morse under generic $L2$-regularization.
We show that the set of such ``good'' polynomials is semialgebraic, and has positive codimension.
It may be possible to prove an analog of Proposition \ref{prop:polys_L2-reg} in the setting of neural nets with polynomial activation function.
We leave this task to future work.

In their current form, neither result is sufficient to prove an analogue of Theorem \ref{thm:using_lerario} for $L2$-regularization.
However these results demonstrate how techniques from differential and semialgebraic geometric can be used to probe the relationship between $L2$-regularization and the Morse property, and may be of use in other settings.
In future work, it may be possible to extend one or both to the setting of deep neural networks.

\subsection{Sum of squares}

In this subsection, we consider the following version of Open Question \ref{openquestion}.

\begin{question}
Consider a smooth function $L\colon \bR^d \to \bR$.
What conditions must $L$ satisfy so that for almost every $\eps \in \bR$, $L(x_1,\ldots,x_n) + \eps\bigl(x_1^2+\cdots+x_d^2\bigr)$ is Morse?
\end{question}

\noindent
As we saw in Example \ref{ex:counterex}, we cannot expect that a given $L$ will satisfy this condition, without imposing conditions on $L$.

We can immediately reformulate this in polar coordinates, by rewriting $L(x_1,\ldots,x_n)$ as \\ $L(r,\theta_1,\ldots,\theta_{d-1})$ for suitably-chosen radial and angular coordinates.
(This reparametrization is valid only away from a certain codimension-2 locus.)
In this reformulation, the question becomes the following:

\begin{question}
Consider a smooth function $L\colon \bR_{>0} \times \bR^{d-1} \to \bR$.
What conditions must $L$ satisfy so that for almost every $\eps \in \bR$, $L(r,\theta_1,\ldots,\theta_{d-1}) + \eps r^2$ is Morse?
\end{question}

\noindent
Finally, note that $r \mapsto r^2$ is a diffeomorphism from $\bR_{>0}$ to itself.
We can therefore replace the second version of the question with a third version, which is even simpler:

\begin{question}
\label{q:rho}
Consider a smooth function $L\colon \bR_{>0} \times \bR^{d-1} \to \bR$.
What conditions must $L$ satisfy so that for almost every $\eps \in \bR$, $L(\rho,\theta_1,\ldots,\theta_{d-1}) + \eps\rho$ is Morse?
\end{question}

\subsubsection{An answer to Question \ref{q:rho}}

In this subsubsection, we prove the following theorem, which is one answer to Question \ref{q:rho}.

\begin{theorem}
\label{thm:answer_to_rho_q}
Suppose that $L(\rho,\theta_1,\ldots,\theta_{d-1})$ is a smooth function satisfying the following condition:
\begin{align}
\label{eq:misses_S_hypothesis}
\nabla_{\ul\theta}L(p_0) = 0
\implies
\text{rank}\bigl(
\partial_\rho\nabla_{\ul\theta}L(p_0)
\:\:\:
\nabla_{\ul\theta}^2L(p_0)
\bigr)
=
d-1.
\end{align}
Then $L + \eps\rho$ is Morse for almost every $\eps \in \bR$.
Moreover, the condition \eqref{eq:misses_S_hypothesis} defines a residual subset, in both the weak and strong topologies.
\end{theorem}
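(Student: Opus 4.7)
\medskip
\noindent\textbf{Proof proposal.}
The plan is to recognize the Morse property of $L+\eps\rho$ as a genericity statement about the regular values of an auxiliary smooth function on a $1$-manifold, and then invoke Sard's theorem; the residuality claim will follow from Thom's jet transversality theorem. First I would observe that the rank hypothesis \eqref{eq:misses_S_hypothesis} is precisely the statement that $0$ is a regular value of the smooth map $F\coloneqq\nabla_{\ul\theta}L\colon\bR_{>0}\times\bR^{d-1}\to\bR^{d-1}$, since the matrix $(\partial_\rho\nabla_{\ul\theta}L\ \ \nabla_{\ul\theta}^2 L)$ is exactly the Jacobian of $F$. By the regular value theorem, $S\coloneqq F^{-1}(0)$ is then a smooth $1$-dimensional submanifold of $\bR_{>0}\times\bR^{d-1}$, and a point $p_0$ is a critical point of $L+\eps\rho$ exactly when $p_0\in S$ and $\partial_\rho L(p_0)=-\eps$; equivalently, the critical set of $L+\eps\rho$ coincides with the fiber $\Psi^{-1}(\eps)$ of the smooth function $\Psi\coloneqq-\partial_\rho L|_S\colon S\to\bR$.

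Next I would analyze when such a critical point is degenerate. In the block decomposition
\[
\Hess L(p_0)=\begin{pmatrix}\partial_\rho^2 L & (\partial_\rho\nabla_{\ul\theta}L)^T\\ \partial_\rho\nabla_{\ul\theta}L & \nabla_{\ul\theta}^2 L\end{pmatrix},
\]
the bottom $d-1$ rows form the matrix $H$ of \eqref{eq:misses_S_hypothesis}, whose $1$-dimensional kernel equals $T_{p_0}S$. Any vector in $\ker\Hess L(p_0)$ must therefore lie in $T_{p_0}S$, and a direct computation identifies the remaining scalar condition (the vanishing of the top row paired against such a vector) with the vanishing of $d\Psi_{p_0}$ on $T_{p_0}S$. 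Hence the degenerate critical points of $L+\eps\rho$ are precisely those critical points of $\Psi\colon S\to\bR$ that lie in $\Psi^{-1}(\eps)$. Sard's theorem applied to $\Psi$ implies that its set of critical values has Lebesgue measure zero in $\bR$, whence $L+\eps\rho$ is Morse for almost every $\eps\in\bR$.

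For the residuality claim I would invoke Thom's jet transversality theorem. In the $2$-jet bundle $J^2(\bR^d,\bR)$, the ``bad'' locus --- where both $\nabla_{\ul\theta}L=0$ and $\on{rank}(H)\le d-2$ --- is a stratified set of codimension at least $(d-1)+2=d+1$, with the first summand coming from the vanishing of $d-1$ components of the $1$-jet and the second from the codimension of the determinantal locus of $(d-1)\times d$ matrices of rank at most $d-2$. Stratifying by the rank of $H$ and applying Thom's theorem (valid in both the weak and strong $\cC^\infty$ topologies) to each stratum, each of codimension exceeding $\dim\bR^d=d$, yields a residual set of $L$ whose $2$-jet avoids every stratum and hence satisfies \eqref{eq:misses_S_hypothesis}. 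The main obstacle I foresee is the kernel identification in the second step: one must verify carefully that the scalar equation left after imposing $v\in\ker H$ truly encodes the vanishing of $d\Psi|_{T_{p_0}S}$, via a chain-rule calculation that uses the defining equation of $S$. The codimension count in the jet-transversality step also demands attentive bookkeeping of the determinantal stratification, but the quantitative bound $d+1>d$ is the essential input for both topologies.
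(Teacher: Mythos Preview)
Your proposal is correct and follows essentially the same route as the paper: both arguments cut out the one-dimensional submanifold $\{\nabla_{\ul\theta}L=0\}$ using the rank hypothesis, apply Sard's theorem to $-\partial_\rho L$ restricted to this curve, and identify regularity of $\eps$ with nonsingularity of the Hessian by noting that the kernel of the bottom $(d-1)$ rows is the tangent line to the curve. The residuality argument via jet transversality and the codimension bound $d+1>d$ is likewise the same; the one point where the paper is slightly more careful is that the $(d-1)\times d$ matrix $(\partial_\rho\nabla_{\ul\theta}L\ \ \nabla_{\ul\theta}^2L)$ has a \emph{symmetric} second block, so the codimension-$2$ claim for the rank-drop locus must be verified in that constrained space rather than in all $(d-1)\times d$ matrices---you flag this bookkeeping issue, and the paper handles it by a short case analysis on $\rk(\nabla_{\ul\theta}^2L)$.
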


\noindent
Before we prove Theorem \ref{thm:answer_to_rho_q}, we prove some supporting lemmas.
We begin by defining a certain 2-jet space:
\begin{align}
J^2(\bR_{>0}\times\bR^{d-1},\bR)
\coloneqq
\bigl\{
(\rho,\ul\theta,y,\sigma,\xi,H)
\in
\bR_{>0}
\times
\bR^{d-1}
\times
\bR
\times
\bR
\times
\bR^{d-1}
\times
\Sym(d)
\bigr\}.
\end{align}
Moreover, we make the following notation for the different parts of $H$:
\begin{align}
\left(
\begin{array}{rr}
h_1 & h_2^T
\\
h_2 & H_2
\end{array}\right).
\end{align}
A function $L\colon \bR_{>0}\times\bR^{d-1} \to \bR$ induces a map to the 2-jet space, via the following formula:
\begin{align}
j^2L
\colon
\bR_{>0}\times\bR^{d-1}
&\to
J^2(\bR_{>0}\times\bR^{d-1},\bR),
\\
(\rho,\ul\theta)
&\mapsto
\bigl(\rho,
\ul\theta,
L(\rho,\ul\theta),
\partial_\rho L(\rho,\ul\theta),
\nabla_{\ul\theta}L(\rho,\ul\theta),
\nabla^2L(\rho,\ul\theta)\bigr).
\nonumber
\end{align}
In the upcoming lemma, we will consider the question of whether it is typical for the image of $j^2L$ to intersect the following set:
\begin{align}
S
\coloneqq
\left\{
(\rho,\ul\theta,y,\sigma,\xi,H) \in J^2(\bR_{>0}\times\bR^{d-1},\bR)
\:\left|\:
\xi=0,
\rk\bigl(\begin{array}{rr}
h_2
&
H_2
\end{array}\bigr) \leq d-2
\right.\right\}
\end{align}
We note that $j^2L$ missing $S$ is, in fact, exactly the condition \eqref{eq:misses_S_hypothesis} in Theorem \ref{thm:answer_to_rho_q}.

\begin{lemma}
\label{lem:jet_misses_S}
For generic $L$, $j^2L$ misses $S$.
\end{lemma}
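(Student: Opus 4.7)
The plan is to apply Thom's Jet Transversality Theorem: show that $S$ is a finite union of smooth submanifolds of $J^2(\bR_{>0}\times\bR^{d-1},\bR)$, each of codimension strictly greater than $d=\dim(\bR_{>0}\times\bR^{d-1})$. Once this is established, transversality of $j^2L$ to each stratum forces the intersection to be empty for dimensional reasons, and Thom's theorem supplies a residual set of $L$ (in both the weak and strong topologies) for which this holds simultaneously across strata.

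First I would stratify by rank, writing $S=\bigsqcup_{r=0}^{d-2}S_r$, where $S_r$ is cut out by $\xi=0$ and $\rk(h_2\mid H_2)=r$. The condition $\xi=0$ is $d-1$ independent linear equations in the jet coordinates, and contributes codimension $d-1$. What remains is to compute the codimension of the locus where the $(d-1)\times d$ matrix $(h_2\mid H_2)$ drops rank, inside the space of pairs $(h_2,H_2)\in\bR^{d-1}\times\Sym(d-1)$. The standard technique is an incidence variety argument: set
\begin{align}
I \coloneqq \bigl\{(v,h_2,H_2)\in\bP^{d-2}\times\bR^{d-1}\times\Sym(d-1)\:\big|\: v^Th_2=0,\: H_2v=0\bigr\}.
\end{align}
For fixed $v\neq 0$, the conditions $v^Th_2=0$ and $H_2v=0$ are $1+(d-1)=d$ linearly independent equations on the pair space, so $I$ has codimension $d-(d-2)=2$ in the pair space once we project down. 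Since over matrices of rank exactly $d-2$ the projection is generically one-to-one (the kernel is a unique line), the top stratum of the rank locus has codimension $2$, and the strata of smaller rank have strictly larger codimension. In total $\codim S_r\geq (d-1)+2=d+1>d$ for every $r$.

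Now I would invoke Thom's Jet Transversality Theorem: for generic smooth $L\colon\bR_{>0}\times\bR^{d-1}\to\bR$, the 2-jet $j^2L$ is transverse to every $S_r$. Transversality to a submanifold of codimension exceeding the dimension of the source means empty intersection, so $j^2L$ misses each $S_r$, hence all of $S$.

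The main obstacle I expect is justifying that the symmetry constraint $H_2^T=H_2$ does not collapse the codimension of the rank condition below $2$. This reduces to checking that the linear map $\Sym(d-1)\to\bR^{d-1}$ sending $H_2\mapsto H_2 v$ remains surjective for every nonzero $v$, which is straightforward (given $v$, one can realize any target vector using a rank-one symmetric matrix built from $v$ and a suitable partner). A secondary technical point is to verify that the strata $S_r$ are genuine smooth submanifolds (standard for determinantal strata) so that Thom's theorem applies stratum-by-stratum; alternatively, one can phrase the conclusion using the Whitney-stratified version of jet transversality.
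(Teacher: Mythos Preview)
Your proposal is correct and follows the paper's overall strategy: bound $\codim S \geq d+1$ (with $d-1$ coming from $\xi=0$ and at least $2$ from the rank drop of $(h_2\mid H_2)$), then invoke jet transversality so that $j^2L\cap S=\emptyset$ for generic $L$ by dimension reasons. The only substantive difference is in how the codimension-$2$ bound for the rank condition is obtained. The paper argues by a case split on $\rk H_2$: if $\rk H_2\leq d-3$ one already has codimension $\geq 2$ inside $\Sym(d-1)$ (corank-$\geq 2$ symmetric matrices form a codimension-$\geq 2$ locus), while if $\rk H_2=d-2$ the further constraint $h_2\in\im H_2$ is packaged as a rank-$(d-2)$ vector bundle over this stratum, and an explicit dimension count gives codimension exactly $2$. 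Your incidence-variety argument via the kernel line in $\bP^{d-2}$ is equivalent but cleaner and more uniform across strata; your surjectivity check for $\Sym(d-1)\to\bR^{d-1}$, $H_2\mapsto H_2 v$, is exactly what guarantees the fiber count survives the symmetry constraint, playing the role of the paper's Step~1. You are also slightly more explicit than the paper about stratifying $S$ into smooth pieces before applying Thom; the paper simply cites jet transversality against $S$ without spelling this out.
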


\begin{proof}
{\bf Step 1:}
{\it In the space $\Sym(k)$ of symmetric $k\times k$ matrices, the locus $Z$ of matrices $M$ satisfying $\rk M \leq k-2$ has codimension at least 2.}

\medskip

\noindent
Consider the locus $Z' \subset \Sym(k)$ of matrices $M$ with $\rk M \leq k-1$.
This locus is defined by a nonzero irreducible polynomial, so it is an irreducible subvariety of codimension at least 1.
We have $Z \subsetneq Z'$, and $Z$ is an algebraic subvariety of $Z'$, hence $Z$ must have codimension at least 2 in $\Sym(k)$.

\medskip

\noindent
{\bf Step 2:}
{\it In the space $\Sym(d)$ of symmetric $d\times d$ matrices, the locus $X$ of matrices $H$ satisfying
\begin{align}
\rk\bigl(\begin{array}{rr}
h_2
&
H_2
\end{array}\bigr) \leq d-2
\end{align}
has codimension at least 2.}

\medskip

\noindent
We begin by observing that the codimension in question is equal to the codimension of
\begin{align}
X
\coloneqq
\bigl\{
(b \:\: A)
\:\big|\:
A = A^T,
\:
\rk(b \:\: A) \leq d-2
\bigr\}
\end{align}
inside the space of $(d-1)\times d$ matrices $(b \:\: A)$ satisfying $A = A^T$.

Suppose that $(b \:\: A)$ lies in $X$.
There are then two possibilities:
\begin{enumerate}
\item
$\rk A \leq d-3$.

\item
$\rk A = d-2$.
\end{enumerate}
The first case is straightforward: by Step 1, the set of matrices $A$ with $\rk A \leq d-3$ has codimension at least 2 in $\Sym(d-1)$.

Dealing with the second case is slightly more involved.
If $\rk A = d-2$, we have $\rk(b \:\: A) = d-2$ if and only if $b \in \im(A) \simeq \ker(A)^\perp \simeq \bR^{d-2}$.
Define a space $Y$, which is a mild repackaging of the space of matrices we are currently considering, like so:
\begin{align}
Y
\coloneqq
\bigl\{
(b \:\: A)
\:\big|\:
A = A^T,
\:
\rk(A) = d-2,
\:
b \in \ker(A)^\perp
\bigr\}.
\end{align}
By the above reasoning, $Y$ is a vector bundle of rank $d-2$ over $\{A \in \Sym(d-1) \:|\: \rk A = d-2\}$.
We can use this to compute the dimension of $Y$:
\begin{align}
\dim Y
&=
(\dim \Sym(d-1) - 1) + (d-2)
\\
&=
\frac{d(d-1)}2 + d - 3
\nonumber
\\
&=
\left(\frac{d(d+1)}2 - 1\right) - 2.
\nonumber
\end{align}
Therefore the dimension of $Y$ has dimension 2 less than the dimension of that of the space of $(d-1)\times d$ matrices $(b \:\: A)$ satisfying $A = A^T$.

\medskip

\noindent
{\bf Step 3:}
{\it We prove the lemma.}

\medskip

\noindent
By the Jet Transversality Theorem \cite[Theorem 3.2.8 and Exercise 3.8(b)]{hirsch}, for a generic $\cC^k$-function $L$ (for $k \geq 3$ and with respect to either the weak or strong topology), its jet $j^2L$ is transverse to $S$.
It follows from Step 2 that $S$ has codimension at least $d+1$.
Since the domain of $L$ has dimension $d$, a generic $L$ must miss $S$.
\end{proof}

We are now ready to prove the main result of this subsubsection.

\begin{proof}[Proof of Theorem \ref{thm:answer_to_rho_q}]
By Lemma \ref{lem:jet_misses_S}, it suffices to show that for $L$ for which $j^2L$ misses $S$, $L + \eps\rho$ is Morse for almost every $\eps$.
Fix such an $L$.

Define $\Gamma \subset \bR_{>0}\times\bR^{d-1}$ by the formula
\begin{align}
\Gamma
\coloneqq
\left\{
\frac{\partial L}{\partial\theta_1} = 0,
\ldots,
\frac{\partial L}{\partial\theta_{d-1}} = 0
\right\}.
\end{align}
Since $j^2L$ misses $S$, the matrix $(\partial_\rho\nabla_{\ul\theta}L \:\: \nabla_{\ul\theta}^2L)$ has full rank at every point on $\Gamma$.
This matrix is exactly the Jacobian of the equations defining $\Gamma$, so $\Gamma$ is cut out transversely.
It follows that $\Gamma$ is a smooth embedded curve in $\bR_{>0} \times \bR^{d-1}$.

Consider the map
\begin{align}
\alpha
\coloneqq
\left.\frac{\partial L}{\partial\rho}\right|_\Gamma
\colon
\Gamma
\to
\bR.
\end{align}
By Sard's Theorem, almost every point in $\bR$ is a regular value of $\alpha$.
If $-\eps$ is a regular value, we claim that $(-\eps,0,\ldots,0)$ is a regular value of $\nabla L$.
Fix $p_0 \in L^{-1}\{(-\eps,0,\ldots,0)\}$; we must show that the matrix
\begin{align}
\label{eq:decomposed_hessian}
\left(\begin{array}{ll}
\partial_\rho^2L(p_0) & \bigl(\partial_\rho\nabla_{\ul\theta}L(p_0)\bigr)^T
\\
\partial_\rho\nabla_{\ul\theta}L(p_0) & \nabla_{\ul\theta}^2L(p_0)
\end{array}\right)
\end{align}
is nonsingular.
The point $p_0$ lies on $\Gamma$, so it follows from the second paragraph that the submatrix $(\partial_\rho\nabla_{\ul\theta}L(p_0) \:\: \nabla_{\ul\theta}^2L(p_0))$ has rank $d-1$.
To check that the matrix in \eqref{eq:decomposed_hessian} is nonsingular, it suffices to show that the top row $\left(\partial_\rho^2L(p_0) \:\: \bigl(\partial_\rho\nabla_{\ul\theta}L(p_0)\bigr)^T\right)$ is not orthogonal to the kernel of $(\partial_\rho\nabla_{\ul\theta}L(p_0) \hspace{0.1in} \nabla_{\ul\theta}^2L(p_0))^T$.
Fix $(\dot\rho, \dot{\ul\theta}) \in \ker(\partial_\rho\nabla_{\ul\theta}L(p_0) \:\: \nabla_{\ul\theta}^2L(p_0))^T$.
Then $(\dot\rho, \dot{\ul\theta})$ is tangent to $\Gamma$ at $p_0$, so the fact that $-\eps$ is a regular value of $\alpha$ implies that the dot product of $(\dot\rho, \dot{\ul\theta})$ and $\left(\partial_\rho^2L(p_0) \:\: \bigl(\partial_\rho\nabla_{\ul\theta}L(p_0)\bigr)^T\right)$ is nonzero.
Thus the matrix in \eqref{eq:decomposed_hessian} is nonsingular.

We have shown that for generic $-\eps \in \bR$, $(-\eps,0,\ldots,0)$ is a regular value of $\nabla L$.
The regularity of $(-\eps,0,\ldots,0)$ is equivalent to $L + \eps\rho$ being Morse.
\end{proof}

\subsection{$L2$-perturbations of polynomials}

We begin by making some definitions.
\begin{itemize}
\item
For $n \geq 1, d \geq 0$, $P_{n,d}$ is the space of polynomials in $n$ variables of degree at most $d$.

\smallskip

\item
For $L \in P_{n,d}$ and $\eps \in \bR$, we define $L_\eps$ to be the following perturbation of $L$:
\begin{align}
L_\eps(\bx)
\coloneqq
L(\bx)+\tfrac12\eps\|\bx\|^2.
\end{align}

\smallskip

\item
For $L \in P_{n,d}$, we define the set of ``bad perturbations'':
\begin{align}
B(L)
\coloneqq
\{
\eps \in \bR
\,|\,
\textrm{$L_\epsilon$ is not Morse}
\}.
\end{align}

\smallskip

\item
Denote by $M_{n,d}\subseteq P_{n,d}$ the set of polynomials $L$ such that $B(L)$ is finite.
\end{itemize}

\subsubsection{For almost every polynomial, almost every $L2$-perturbation is Morse}

Denote by $M_{n,d}\subseteq P_{n,d}$ the set of polynomials $L$ such that $B(L)$ is finite.

\begin{lemma}
For every $L\in P_{n,d}$, the set $B(L)$ is semialgebraic.
\end{lemma}

\begin{proof}
We begin by rewriting $B(L)$:
\begin{align}
B(L)
&=
\{
\epsilon \in \mathbb{R}
\,|\,
\exists\,
\bx \in \bR^n:
\nabla L(\bx)+\epsilon \bx = 0
\land
\det(\Hess(L)(x)+\epsilon I)=0
\}
\\
&=
\pi_1\bigl(\{
(\epsilon,\bx) \in \bR\times\bR^n
\,|\,
\nabla L(\bx)+\epsilon \bx = 0
\land
\det(\Hess(L)(x)+\epsilon I)=0
\}\bigr),
\nonumber
\end{align}
where $\pi_1\colon \bR\times\bR^n \to \bR$ is the projection onto the first factor.
We have written $B(L)$ as the projection of an algebraic set, hence $B(L)$ is semialgebraic.
\end{proof}

The following proposition says that for almost all polynomials $L \in P_{n,d}$, the perturbation $L_\eps$ fails to be Morse only for finitely many $\eps \in \bR$.

\begin{proposition}
\label{prop:polys_L2-reg}
For any $n \geq 1$ and $d \geq 0$, $M_{n,d}$ is semialgebraic, and its complement has positive codimension.
\end{proposition}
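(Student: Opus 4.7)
The plan is to treat the two claims separately: first semialgebraicity, then positive codimension. For semialgebraicity, I would use that any semialgebraic subset of $\bR$ is a finite union of points and open intervals, so it is finite exactly when it contains no open interval. The preceding lemma shows that $B(L)$ is semialgebraic uniformly in $L$, i.e.\ the set
\[
N \coloneqq \bigl\{(L,\epsilon)\in P_{n,d}\times\bR \,:\, L_\epsilon\text{ is not Morse}\bigr\}
\]
is semialgebraic. Consequently, $L\in M_{n,d}$ is equivalent to the first-order statement ``$\forall a<b\in\bR,\ \exists\,\epsilon\in(a,b)$ with $(L,\epsilon)\notin N$''. By Tarski--Seidenberg, $M_{n,d}$ is semialgebraic.

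For the codimension bound, I would introduce the incidence variety
\[
W \coloneqq \bigl\{(L,\epsilon,\bx)\in P_{n,d}\times\bR\times\bR^n \,:\, \nabla L(\bx)+\epsilon\bx=0,\ \det(\Hess L(\bx)+\epsilon I)=0\bigr\},
\]
an algebraic set. The key step is to bound $\dim W\leq\dim P_{n,d}$ by fibering over $\bR\times\bR^n$: at a generic point $(\epsilon,\bx)$ with $\bx\neq 0$, the gradient equation imposes $n$ independent affine conditions on $L$, and the Hessian equation is a polynomial condition on the second-order Taylor coefficients of $L$ at $\bx$, which (for $d\geq 2$) vary freely on the subspace cut out by the gradient equation. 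So the fiber of the projection $W\to\bR\times\bR^n$ over $(\epsilon,\bx)$ has codimension $n+1$ in $P_{n,d}$, giving $\dim W\leq\dim P_{n,d}$. The boundary cases $d\in\{0,1\}$ are trivial, since then $\Hess L_\epsilon=\epsilon I$ is singular only at $\epsilon=0$, forcing $B(L)\subseteq\{0\}$ and $M_{n,d}=P_{n,d}$.

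Letting $\pi\colon W\to P_{n,d}$ denote projection onto the first factor, upper semi-continuity of fiber dimension for semialgebraic maps (Hardt triviality, or Chevalley in the semialgebraic category) then yields
\[
\dim\bigl\{L\in P_{n,d}:\dim\pi^{-1}(L)\geq 1\bigr\}\,\leq\,\dim W-1\,\leq\,\dim P_{n,d}-1.
\]
To close the argument I would observe that $M_{n,d}^{c}\subseteq\{L:\dim\pi^{-1}(L)\geq 1\}$: if $L\in M_{n,d}^{c}$, then $B(L)$ is a semialgebraic subset of $\bR$ that is not finite, hence contains an open interval $J$, and the semialgebraic set $\pi^{-1}(L)\cap(J\times\bR^n)$ surjects onto $J$ and is therefore at least $1$-dimensional. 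The main obstacle I expect is the dimension count $\dim W\leq\dim P_{n,d}$ itself, specifically verifying that the Hessian equation is not forced to degenerate on the linear subspace cut out by the gradient equation; this requires cleanly separating the first- and second-order Taylor coefficients of $L$, together with a small case analysis handling the locus $\bx=0$. Once that bound is secured, the application of fiber-dimension semi-continuity and the elementary projection argument are routine.
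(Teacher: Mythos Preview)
Your proposal is correct and follows essentially the same route as the paper: both introduce the incidence variety $W=T_{n,d}$, bound its dimension by $\dim P_{n,d}$ via the Taylor decomposition of $L$ (the paper solves the gradient equation for the linear coefficients $\ba_1$ and observes that the Hessian equation cuts out a proper hypersurface in the remaining variables $(\eps,\bx,A_2,R_3)$, which is exactly your ``separating first- and second-order Taylor coefficients''), and then invoke Hardt triviality to conclude that the locus of $L$ with positive-dimensional fiber has positive codimension. The only cosmetic difference is that you deduce semialgebraicity of $M_{n,d}$ directly from Tarski--Seidenberg, whereas the paper reads it off from the Hardt stratification.
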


\begin{proof}
Our proof will involve the following ``universal moduli space of degenerate critical points'' in a key way:
\begin{align}
T_{n,d}
\coloneqq
\{
(\epsilon, \bx, L)
\,|\,
\nabla L(\bx)+\epsilon\bx=0,\, \det(\Hess(L)(\bx)+\eps I)=0
\}
\subset
\bR\times\bR^n\times P_{n,d}.
\end{align}

\noindent
{\bf Step 1:}
{\it
We prove that $\dim P_{n,d} = \dim T_{n,d}$.
}

\medskip

\noindent
We begin by rewriting $L$:
\begin{align}
L(\bx)
=
a_0 + \langle \ba_1, \bx\rangle + \langle \bx, A_2\bx\rangle + R_3(\bx)
\end{align}
for $a_0\in \bR$, $\ba_1\in \bR^n$, $A_2\in \Sym(n, \bR)$, and $R_3(\bx)$ a polynomial of degree at least 3.
In the following two bullets, we reformulate the two defining equations of $T_{n,d}$ in terms of this decomposition.
\begin{itemize}
\item
The equation
$\nabla L(\bx) + \eps\bx = 0$
becomes
\begin{align}
\ba_1 + 2A_2\bx + \nabla R_3(\bx) +\eps\bx
=
0.
\end{align}
Solving for $\ba_1$ yields the following:
\begin{align}
\ba_1
=
-2A_2\bx
-
\nabla R_3(\bx)-\eps\bx.
\end{align}

\smallskip

\item
The equation $\det(\Hess(L)(\bx) + \eps I) = 0$ becomes
\begin{align}
2A_2 + \Hess(R_3)(\bx) + \eps I
=
0.
\end{align}
Note that this equation does not involve the coefficients $a_0$ and $\ba_1$.
\end{itemize}

Using these two bullets, we rewrite $T_{n,d}$:
\begin{align}
T_{n,d}
=
\{
(\eps, \bx, L)
\,|\,
a_0 \in \bR,
\,
\ba_1=-2A_2\bx-\nabla R_3(\bx)-\eps\bx,
\,
\det\bigl(2A_2+\Hess(R_3)(\bx)+\eps I\bigr) = 0
\}.
\end{align}
The equation
\begin{align}
\det\bigl(2A_2 + \Hess(R_3)(\bx) + \eps I\bigr) = 0
\end{align}
defines an algebraic hypersurface $H$ in the space $\{(\eps, \bx, A_2, R_3)\}$.
Indeed, $H$ is algebraic by construction, and proper since  $\det(2I)\neq 0$ (we are evaluating the defining polynomial at $A_2=I$ and $R_3=0$, $\bx=0$, $\epsilon=0$).
It follows that $\codim H = 1$.
Examining the definition of $T_{n,d}$, we see that $T_{n,d}$ has codimension $n+1$ inside $\bR \times \bR^{n+1} \times P_{n,d}$.
It follows that $\dim T_{n,d} = \dim P_{n,d}$.

\medskip

\noindent
{\bf Step 2:}
{\it We prove the proposition.}

\medskip

\noindent
Define $\pi_3\colon T_{n,d}\to P_{n,d}$ to be the projection onto the third factor.
By semialgebraic triviality, there exists a finite semialgebraic decomposition
\begin{align}
P_{n,d}
=
\bigsqcup_{j=1}^r P_j
\end{align}
such that for every $k$ there is a semialgebraic fiber $F_k$ and a semialgebraic homeomorphism $\pi_3^{-1}(P_k) \simeq P_k \times F_k$.
In particular, $\dim T_{n,d} = \max_k (\dim P_k +\dim F_k)$.
It follows from this and the equality $\dim T_{n,d} = \dim P_{n,d}$ that if $P_k$ has dimension $\dim P_k =\dim P_{n,d}$, then $\dim F_k = 0$.

Define $\wt M_{n,d} \subset P_{n,d}$:
\begin{align}
\wt M_{n,d}
\coloneqq
\bigsqcup_{k:\: \dim P_k = \dim P_{n,d}}
P_k.
\end{align}
Then $M_{n,d} = \wt M_{n,d}$, since a semialgebraic subset of $\bR$ has finite cardinality if and only if it has dimension 0.
$\wt M_{n,d}$ is a semialgebraic set whose complement has positive codimension, so we have proven the proposition.
\end{proof}

\end{document}